\newtheorem{lemma}{Lemma}
\newtheorem{definition}{Definition}
\DeclareMathOperator*{\argmax}{argmax}
\DeclareMathOperator*{\Var}{Var}
\DeclareMathOperator*{\Lap}{Lap}
\DeclareMathOperator*{\E}{\mathbb{E}}
\newcommand{\R}{\mathbb{R}}
\newcommand{\F}{\mathcal{F}}
\renewcommand{\P}{\mathbb{P}}
\newcommand{\y}{\mathbf{y}}
\newcommand{\x}{\mathbf{x}}
\renewcommand{\u}{\mathbf{u}}
\newcommand{\w}{w}
\newcommand{\toD}{\rightsquigarrow}
\newcommand{\toP}{\stackrel{P}{\to}}
\newcommand{\N}{\mathcal{N}}
\newcommand{\betapriv}{\hat{\beta}}
\newcommand{\A}{\mathcal{A}}
\newcommand{\B}{\mathcal{B}}
\DeclareMathOperator{\width}{width}
\definecolor{darkspringgreen}{rgb}{0.09, 0.45, 0.27}
\newcommand{\blue}[1]{\textcolor{blue}{ #1 }}
\newcommand{\ds}[1]{\textcolor{red}{\small [#1 \bf--DS]}}
\newcommand{\cf}[1]{\textcolor{blue}{\small [#1 \bf--CF]}}
\renewcommand{\ds}[1]{}
\renewcommand{\cf}[1]{}
\newcommand{\showproofs}{1}
\begin{document}

%

%

\twocolumn[

\aistatstitle{Parametric Bootstrap for Differentially Private Confidence Intervals}

\aistatsauthor{
  Cecilia Ferrando, \; Shufan Wang, \; Daniel Sheldon\\
  College of Information and Computer Sciences\\
  University of Massachusetts Amherst\\
  \texttt{\{cferrando\},\{shufanwang\},\{sheldon\}@cs.umass.edu} \\
}

\aistatsaddress{ } ]

%

\author{%
  Cecilia Ferrando, \; Shufan Wang, \; Daniel Sheldon\\
  College of Information and Computer Sciences\\
  University of Massachusetts Amherst\\
  \texttt{\{cferrando\},\{shufanwang\},\{sheldon\}@cs.umass.edu} \\
}

\begin{abstract}
The goal of this paper is to develop a practical and general-purpose approach to construct confidence intervals for differentially private parametric estimation. We find that the parametric bootstrap is a simple and effective solution. It cleanly reasons about variability of both the data sample and the randomized privacy mechanism and applies ``out of the box'' to a wide class of private estimation routines. It can also help correct bias caused by clipping data to limit sensitivity. We prove that the parametric bootstrap gives consistent confidence intervals in two broadly relevant settings, including a novel adaptation to linear regression that avoids accessing the covariate data multiple times. We demonstrate its effectiveness for a variety of estimators, and find that it provides confidence intervals with good coverage even at modest sample sizes and performs better than alternative approaches.
\end{abstract}



\section{Introduction}\label{introduction}

Differential privacy provides a rubric for drawing inferences from data sets without compromising the privacy of individuals. 

This paper is about privately constructing confidence intervals. In the non-private case, approximate methods based on asymptotic normality or the bootstrap~\citep{efron1979bootstrap} apply to a wide range of models and are very widely used in practice.
In the private case, such ``swiss army knife'' methods are hard to find. The situation is complicated by the fact that private estimation procedures are necessarily randomized, which leads to a distinct source of randomness (``privacy noise'') in addition to the usual random draw of a finite sample from a population (``sampling noise'').
We find experimentally that asymptotic methods are significantly less effective in private settings, due to privacy noise that becomes negligible only for for very large sample sizes (Section~\ref{sec:experiments}). Bootstrap approaches face the challenge of incurring privacy costs by accessing the data many times~\citep{brawner2018bootstrap}.

This paper advocates using the parametric bootstrap as a simple and effective method to construct confidence intervals for private statistical estimation. The parametric bootstrap resamples data sets from an estimated parametric model to approximate the distribution of the estimator. It is algorithmically simple, can be used with essentially any private estimator, and cleanly reasons about both sampling noise and privacy noise. Unlike the traditional bootstrap, it is based on post-processing and avoids accessing the data many times, so it often has little or no privacy burden. By reasoning about the distribution of a finite sample, it makes fewer assumptions than purely asymptotic methods and significantly mitigates the problem of non-negligible privacy noise. The parametric bootstrap can also help correct bias in private estimation caused by artificially bounding data to limit sensitivity.

We first introduce the parametric bootstrap and discuss its application to private estimation, including methods to construct confidence intervals and correct bias. We then review parametric bootstrap theory, and apply the parametric bootstrap to obtain provably consistent confidence intervals in two private estimation settings---exponential families and linear regression sufficient statistic perturbation (SSP)---as well as an empirical demonstration for the ``one posterior sample'' (OPS) method~\citep{Wang:2015aa,foulds2016theory,ZZhang:2016aa}. These demonstrate the broad applicability the parametric bootstrap to private estimation.

One limitation of the parametric bootstrap is the restriction to fully parametric estimation. For example, it doesn't apply directly to regression problems that do not have a parametric model for covariates, and may not be appropriate very complex data. 
In our linear regression application, we contribute a novel hybrid bootstrap approach to circumvent this limitation; the resulting method is easy to use and simultaneously estimates regression coefficients and constructs confidence intervals with good coverage properties. A second limitation is computational cost, which scales with the data size. For small or medium data sets, the cost is likely manageable. For very large ones, cheap asymptotic methods will often be adequate (see Section~\ref{sec:experiments}; for exponential families and linear regression with sufficient statistic perturbation, the asymptotic distributions are a relatively simple byproduct of our bootstrap theory). However, it is unknown in general how large data must be for asymptotic methods to perform well.

\section{Background}\label{background}

Differential privacy is a formal definition to capture the notion that, to maintain privacy, the output of an algorithm should remain nearly unchanged if the data of one individual changes. Say that two data sets $X$ and $X'$ of size $n$ are \emph{neighbors} if they differ in exactly one data record.

\begin{definition}[Differential privacy, \citealt{dwork2006calibrating}]
A randomized algorithm $\mathcal{A}$ satisfies $\epsilon$-differential
privacy ($\epsilon$-DP) if, for neighboring data sets $X$ and $X'$, and any subset $O \subseteq \text{Range}(\mathcal{A})$,
$$ \Pr[\mathcal{A}(X) \in O] \leq \exp(\epsilon) \Pr[\mathcal{A}(X') \in O]. $$
\end{definition}

One common way to achieve differential privacy is by injecting calibrated noise onto the statistics computed from the data. Let $f$ be any function that maps data sets to $\mathbb{R}^{d}$. The magnitude of noise required to privatize the computation of $f$ depends on its \emph{sensitivity}. 

\begin{definition}[Sensitivity, \citealt{dwork2006calibrating}]
The sensitivity of a function f is $$\Delta f = \max_{X,X^{'}} \Vert  f(X) - f(X^{'}) \Vert_{1}$$ where $X,X^{'}$ are any two neighboring data sets.
\end{definition}

When $f$ is additive, it is straightforward to bound its sensitivity (proof in Appendix~\ref{sec:proofs}):
\begin{restatable}{claim}{additive}
\label{claim:additive}
Suppose $X = (x_1, \ldots, x_n)$ and $f(X) = \sum_{i=1}^n g(x_i)$ where $g$ maps data points to $\R^m$. Let $\width (g_j) = \max_{x} g_j(x) - \min_x g_j(x)$ where $x$ ranges over the data domain. Then $\Delta f \leq \sum_{j=1}^m \width(g_j)$, which is a constant independent of $n$.
\end{restatable}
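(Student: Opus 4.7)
The plan is to unwind the definitions. Fix two neighboring datasets $X = (x_1, \ldots, x_n)$ and $X' = (x_1', \ldots, x_n')$ that differ in exactly one record; by relabeling, assume they agree on indices $i \neq k$ and differ only at index $k$, so $x_i = x_i'$ for $i \neq k$ and $x_k \neq x_k'$ in general.

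Next, I would exploit additivity: because $f$ is a sum over individual records and all but one record is shared, the sums telescope to
\begin{equation*}
f(X) - f(X') = \sum_{i=1}^n g(x_i) - \sum_{i=1}^n g(x_i') = g(x_k) - g(x_k').
\end{equation*}
Crucially, the $n-1$ matching terms cancel, leaving a quantity that does not depend on $n$.

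Then I would take the $\ell_1$ norm coordinatewise. Writing $g = (g_1, \ldots, g_m)$,
\begin{equation*}
\lVert f(X) - f(X') \rVert_1 = \sum_{j=1}^m \bigl| g_j(x_k) - g_j(x_k') \bigr|.
\end{equation*}
For each coordinate $j$, both $g_j(x_k)$ and $g_j(x_k')$ lie in the interval $[\min_x g_j(x), \max_x g_j(x)]$, so their difference is bounded in absolute value by the length of that interval, which is exactly $\width(g_j)$. Summing over $j$ and then taking a maximum over all neighboring pairs $(X, X')$ yields $\Delta f \leq \sum_{j=1}^m \width(g_j)$, a quantity independent of $n$ since neither the coordinatewise bounds nor the index set $\{1,\ldots,m\}$ depend on the sample size.

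There is no real obstacle here; the only thing to be a little careful about is handling the case where $\max$ or $\min$ of $g_j$ is not attained (in which case one should state the result with $\sup$ and $\inf$, or simply note that the bound holds as a supremum over the data domain). With the $\max/\min$ notation used in the claim I will implicitly assume the domain and $g$ are such that these are attained, or else read $\width$ as $\sup - \inf$.
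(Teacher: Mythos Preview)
Your proposal is correct and follows essentially the same approach as the paper: reduce $f(X)-f(X')$ to $g(x_k)-g(x_k')$ by additivity, bound each coordinate $|g_j(x_k)-g_j(x_k')|$ by $\width(g_j)$, and sum. The paper's proof is terser but identical in substance.
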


Many algorithms satisfy differential privacy by using the Laplace mechanism.
\begin{definition}[Laplace mechanism, \citealt{dwork2006calibrating}]
Given a function $f$ that maps data sets to $\mathbb{R}^{m}$, the Laplace mechanism outputs the random variable $\mathcal{L}(X) \sim \Lap(f(X), \Delta f / \epsilon)$
from the Laplace
distribution, which has density $\Lap(z; u, b) = (2b)^{-m} \exp ( - \left\Vert  z - u \right\Vert_{1} / b)$. This corresponds to
adding zero-mean independent noise $u_i \sim \Lap(0, \Delta f / \epsilon)$ to each component of $f(X)$.
\end{definition}

\section{Parametric Bootstrap}\label{sec:bootstrap}
\begin{wrapfigure}{R}{0.32\textwidth}
\vspace{-23pt}
\begin{minipage}{0.32\textwidth}
\begin{algorithm}[H]
  \caption{Parametric Bootstrap\label{alg:bootstrap}}
  \begin{algorithmic}[1]
  \State \textbf{Input:} $x_{1:n}$, $B$, estimator $\A$
  \State $\hat{\theta}, \hat \tau \leftarrow \A(x_{1:n})$
    \For {$b$ from $1$ to $B$}
        \State $x^*_1, \ldots, x^*_n \sim P_{\hat{\theta}}$
        \State $\hat\theta^{*b}, \hat\tau^{*b} \leftarrow \A(x^*_{1:n})$
    \EndFor
    \State \Return{$\hat{\tau}, \big(\hat\tau^{*1}, \ldots, \hat\tau^{*B}\big)$}
\end{algorithmic}
\end{algorithm}
\end{minipage}
\vspace{-8pt}
\end{wrapfigure}
We consider the standard setup of parametric statistical inference, where a data sample $x_{1:n} = (x_1, \ldots, x_n)$ is observed and each $x_i$ is assumed to be drawn independently from a distribution $P_\theta$ in the family $\{P_\theta: \theta \in \Theta\}$ with unknown $\theta$.

The goal is to estimate some population parameter $\tau = \tau(\theta)$, the \emph{estimation target}, via an estimator $\hat\tau = \hat\tau(x_{1:n})$.\footnote{We use a hat on variables that are functions of the data and therefore random.} We also seek a $1-\alpha$ confidence interval for $\tau$, that is, an interval $[\hat a_n, \hat b_n]$ such that $\P_\theta( \hat a_n \leq \tau \leq \hat b_n) \approx 1-\alpha$, where $\P_\theta$ is the probability measure over the full sample when the true parameter is~$\theta$.
We will require $\hat\tau$ and $[\hat a, \hat b]$ to be differentially private. Our primary focus is not designing private estimators $\hat\tau$, but designing methods to construct private confidence intervals $[\hat a, \hat b]$ that can be used for many estimators and have little additional privacy burden.

The parametric bootstrap is a simple way to approximate the distribution of $\hat\tau$ for confidence intervals and other purposes. It is a variant of Efron's bootstrap~\citep{efron1979bootstrap,efron1981anonparametric, efron1981bnonparametric, efron1986bootstrap}, which runs an estimator many times on simulated data sets whose distribution approximates the original data. In the parametric bootstrap, data sets are simulated from $P_{\hat\theta}$, the parametric distribution with estimated parameter $\hat\theta$.\footnote{In the non-parametric bootstrap, data sets are simulated from the empirical distribution of $x_{1:n}$.} The procedure is shown in Algorithm~\ref{alg:bootstrap}, where $\A$ is an algorithm that computes the estimates $\hat\theta$ and $\hat\tau$ from the data. A simple case is when $\hat\tau(x_{1:n}) = \tau\big(\hat\theta(x_{1:n})\big)$ but in general these may be estimated separately.

The parametric bootstrap is highly compatible with differential privacy. The data is only accessed in Line 2, so the only requirement is that $\A$ be differentially private (which necessitates it is randomized). The remaining steps are post-processing and incur no additional privacy cost. The simulation cleanly handles reasoning about both data variability (Line 4) and randomness in the estimator (Line 5). When the estimation target is $\theta$, we have $\hat\tau = \hat\theta$, and the procedure incurs no privacy cost beyond that of the private estimator $\hat\tau$. In other cases, additional privacy budget is required to estimate the full vector $\theta$ including nuisance parameters; an example is estimating the mean of a Gaussian with unknown variance~\citep{du2020differentially}.

\subsection{Confidence Intervals and Bias Correction}\label{CIsBiasCorrection}

\begin{figure}[h]
  \centering
\begin{minipage}{0.45\textwidth}
\vspace{-20pt}
\hspace{-15pt}
\begin{table}[H]
  \scriptsize
  \setlength{\tabcolsep}{2.8pt}
  \renewcommand{\arraystretch}{1.2}
  \newcolumntype{L}{>{\raggedright\arraybackslash}m{3cm}}
  \caption{ Bootstrap confidence intervals. $\hat{\xi}_{\gamma}$ is the $1-\gamma$ quantile of $\hat\tau^* - \hat\tau$ or $(\hat\tau^* - \hat\tau)/\hat\sigma^*$, and $\hat{\zeta}_\gamma$ is the $1-\gamma$ quantile of $\hat\tau^*$. \label{table:CI-methods}}
\begin{tabular}{@{}lccc@{}}
  \toprule
  Interval       & Target & Target interval & $\tau$ interval \\
  \midrule
  Pivotal  &
  $\hat\tau - \tau$ &
  $[\hat\xi_{1-\frac{\alpha}{2}}, \hat\xi_{\frac{\alpha}{2}}]$ &
  $[\hat\tau - \hat\xi_{\frac{\alpha}{2}},\ \hat\tau - \hat\xi_{1-\frac{\alpha}{2}}]$ \\
  \midrule
  Studentized\\pivotal &
  $\frac{(\hat\tau - \tau)}{\hat\sigma}$ &
  $[\hat\xi_{1-\frac{\alpha}{2}}, \hat\xi_{\frac{\alpha}{2}}]$ &
  $[\hat\tau - \hat\xi_{\frac{\alpha}{2}} \hat\sigma,\ \hat\tau - \hat\xi_{1-\frac{\alpha}{2}}\hat\sigma]$ 
  \\
  \midrule
  Efron's\\percentile &
  $\hat\tau$ &
  $[\hat\zeta_{1-\frac{\alpha}{2}},\ \hat\zeta_{\frac{\alpha}{2}}]$ &
  $[\hat\zeta_{1-\frac{\alpha}{2}},\ \hat\zeta_{\frac{\alpha}{2}}]$ \\
  %
  \bottomrule
\end{tabular}
\end{table}
\end{minipage}
\end{figure}

There are several well known methods to compute confidence intervals from bootstrap replicates. Three are listed in Table~\ref{table:CI-methods}; note that names are inconsistent in the literature.\footnote{Our mathematical presentation follows~\citet{van2000asymptotic}, but names  follow~\citet{wasserman2006all}. The names ``pivotal'' and ``studentized pivotal'' and are descriptive and avoid the confusion of ``percentile interval'' sometimes referring to the pivotal interval and other times to Efron's percentile interval. The possessive "Efron's"~\citep{van2000asymptotic} clarifies that we use Efron's definition of ``percentile interval''~\citep[e.g.,][]{efron2016computer}.}
The general principle is to treat the pair $(\hat\tau^*, \hat\tau)$ analogously to $(\hat\tau, \tau)$ to approximate the distribution of the latter. The intervals differ according to what function of $(\hat\tau, \tau)$ they target. A simple example is to approximate the ``pivot'' $\hat\tau - \tau$ by $\hat\tau^* - \hat\tau$, which leads to the pivotal interval. To construct it, we estimate the $1-\gamma$ quantile of $\hat\tau^* - \hat\tau$ as the $1-\gamma$ quantile of the bootstrap replicates $(\hat\tau^{*1}-\hat\tau, \ldots, \hat\tau^{*B}-\hat\tau)$. The number of replicates controls the error introduced by this step. This error is usually ignored theoretically because, in principle, it can be reduced arbitrarily with enough computation, and it can be controlled well in practice. The studentized pivotal interval targets $(\hat\tau - \tau)/\hat\sigma$ instead, where $\hat\sigma$ is a standard error estimate of the main procedure; it can converge faster than the pivotal interval~\citep{wasserman2006all}. Efron's percentile interval targets $\hat\tau$ directly and, while simple, its logic is less obvious; it can also be viewed as targeting the pivot $\hat\tau - \tau$ with a ``reversed'' interval, which is how theoretical properties are shown. By approximating $\hat\tau - \tau$ by $\hat\tau^* - \hat\tau$, we can also estimate the bias of $\hat\tau$. This leads to a simple bias corrected estimator $\hat\tau_{\text{bc}}$:
$$
\widehat{\text{bias}} = \mathbb{E}[\hat\tau^* - \hat\tau], \quad \hat{\tau}_{\text{bc}} \leftarrow \hat \tau - \widehat{\text{bias}}.
$$
Similar to the quantiles above, $\mathbb{E}[\hat\tau^* - \hat\tau]$ is estimated as the sample mean over bootstrap replicates.



\subsection{Significance and Connection to Other Resampling Methods for Private Estimation}\label{significance}

The parametric bootstrap can be applied to any parametric estimation problem, a wide range of private estimators, is very accurate in practice, and has little or no additional cost in terms of privacy budget or algorithm development.
These make it an excellent default choice (to our knowledge, the best) for constructing private confidence intervals for any parametric estimation problem with small to medium data sets.

That such a simple and effective choice is available is not articulated in the literature. Two prior works use methods that can be viewed as the parametric bootstrap, but do not discuss the classical procedure and its wide ranging applications, or different facets of bootstrap methodology such as techniques for constructing confidence intervals and bootstrap theory. Specifically, the simulation approach of \citet{du2020differentially} for Gaussian mean confidence intervals is equivalent to the parametric bootstrap with a non-standard variant of Efron's percentile intervals, and performed very well empirically. In their application to independence testing, \citet{gaboardi2016differentially} approximate the distribution of a private test statistic by simulating data from a null model after privately estimating its parameters; this can be viewed as an application of the parametric bootstrap to the null model.

Several other works use resampling techniques that resemble the parametric bootstrap for a similar, but conceptually distinct, purpose~\citep{d2015differential,wang2019differentially,evans2019statistically}. A typical setup is when $\hat\tau = \tau' + \eta$, where $\tau'$ is a non-private estimator and $\eta$ is noise added for privacy. Standard asymptotics are used to approximate $\sqrt{n}(\tau' - \tau)$ as $\N(0, \hat\sigma)$, where $\hat\sigma$ is a (private) standard error estimate for $\tau'$. For the private estimator, this gives $\sqrt{n}(\hat\tau-\tau) \approx \N(0, \hat\sigma) + \sqrt{n}\eta$. Because $\eta$ has known distribution, Monte Carlo sampling can be used to draw samples from $\N(0, \hat\sigma) + \sqrt{n}\eta$ for computing confidence intervals or standard errors. The key distinction is that \emph{standard} asymptotics are used to approximate the distribution of $\tau'$, which captures all variability due to the data, and sampling is used only to combine that distribution with the privacy noise distribution. In contrast, the key feature of a bootstrap method is that it resamples data sets to reason about estimator variability due the random data, and thereby avoids standard asymptotics. This technique also does not apply when the privacy mechanism is more complicated than adding additive noise to a non-private estimate (cf. the OPS example of Sec.~\ref{sec:OPS}).




\section{Bootstrap Theory}\label{theory}

This section gives general results we can use to argue correctness of bootstrap confidence intervals in private settings. We give a general notion of ``bootstrap'' estimator that covers different  resampling methods. Let $(\Omega, \F, \P_\theta)$ be the probability space for $x_1, x_2, \ldots \sim P_\theta$ and $\eta_1, \eta_2, \ldots$ where, for a given $n$, the data is $x_{1:n}$ and $\eta_n$ captures any other randomness used in the privacy mechanism or estimator; we refer to this as the ``outer'' probability space. A bootstrap estimator is defined in terms of a random experiment over an ``inner'' probability space conditional on $\omega \in \Omega$ and $n$. Let $\P_n^*(\cdot | \omega)$ be a Markov kernel defining this space. The traditional bootstrap uses $\P_n^*(\cdot | \omega) = \hat{P}^n$ with $\hat{P}(dx) = \frac{1}{n} \sum_{i} \delta_{x_i}(dx)$; the parametric bootstrap uses $\hat{P} = P_{\hat \theta}$ instead. Our hybrid model in Section~\ref{linearregression} uses a custom resampling method, which gives a custom measure $\P_n^*(\cdot | \omega)$.

For our purposes, a bootstrap estimator of a parameter $\tau(\theta)$ is a random variable $\hat\tau^*_n$ in the inner probability space that simulates the parameter estimate $\hat\tau_n$ of the ``main'' procedure. Typically, the bootstrap estimator arises from running the main procedure on resampled data. That is, if $\hat\tau_n = T_n(\omega)$, then $\hat\tau^* = T_n(\omega^*)$ with $\omega^*  \sim \P_n^*(\cdot \mid \omega)$. Our hybrid OLS bootstrap will deviate slightly from this pattern.

\subsection{Consistency}
Bootstrap ``success'' has to do with the asymptotic distributions of the (approximate) pivot $\sqrt{n}(\hat\tau_n - \tau)$ and its bootstrapped counterpart $\sqrt{n}(\hat\tau^*_n - \hat\tau_n)$. For studentized intervals, the pivot $(\hat\tau_n - \tau)/\hat \sigma_n$ is used instead, where $\hat\sigma_n$ is a standard error estimate of the main procedure; theory for this case is a straightforward extension if $\hat\sigma_n \to \sigma(\theta)$ in $\P_\theta$-probability~\citep{van2000asymptotic,beran1997diagnosing}.

\begin{definition}
  \label{def:consistency}
  The bootstrap estimator $\hat\tau^*_n$ is \emph{consistent} if
\begin{equation}
\begin{split}
\sup_x 
\Big| 
\P_n^*\Big(
   \sqrt{n}(\hat\tau_n^* - \hat\tau_n) \leq t \mid \omega
\Big)
- \\-
\P_\theta\Big(
   \sqrt{n}(\hat\tau_n - \hat\tau) \leq t
   \Big) \Big|
\toP 0 
\end{split}
\end{equation}
with convergence in $\P_\theta$-probability.
\end{definition}
This says that the Kolmogorov-Smirnov distance between the distribution of the pivot and the conditional distribution of the bootstrapped pivot converges to zero, in probability over $\omega$.

In most cases $\sqrt{n}(\hat \tau_n - \tau) \toD T$ for a continuous random variable $T$. In this case it is enough for the bootstrapped pivot to converge to the correct limit distribution.
\begin{lemma}[\citealt{van2000asymptotic}, Eq. (23.2)]
  \label{lem:consistency}
  Suppose $\sqrt{n}(\hat \tau_n - \tau) \toD T$ for a random variable $T$ with continuous distribution function $F$. Then, $\hat\tau^*_n$ is consistent if and only if, for all $t$,
  $$
  \P^*_n \Big(\sqrt{n}(\hat\tau^*_n - \hat\tau_n) \leq t \mid \omega \Big) \overset{P}{\to} F(t).
  $$
\end{lemma}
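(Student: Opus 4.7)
The plan is to prove the two directions of the ``if and only if'' separately, using the fact that $F$ is continuous to upgrade pointwise statements to uniform ones via a Polya-type argument. Throughout, write $G_n(t \mid \omega) := \P_n^*\big(\sqrt{n}(\hat\tau_n^* - \hat\tau_n) \leq t \mid \omega\big)$ and $H_n(t) := \P_\theta\big(\sqrt{n}(\hat\tau_n - \tau) \leq t\big)$. By assumption $H_n(t) \to F(t)$ for every $t$, and since $F$ is continuous, Polya's theorem promotes this to $\sup_t |H_n(t) - F(t)| \to 0$ deterministically.

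For the easy direction (consistency $\Rightarrow$ pointwise convergence to $F$), fix any $t$. The triangle inequality gives
\begin{equation*}
|G_n(t \mid \omega) - F(t)| \;\leq\; \sup_s |G_n(s \mid \omega) - H_n(s)| \;+\; |H_n(t) - F(t)|.
\end{equation*}
The first term tends to zero in $\P_\theta$-probability by Definition~\ref{def:consistency}, and the second tends to zero deterministically by the Polya bound above. Hence $G_n(t \mid \omega) \toP F(t)$ for every $t$.

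For the harder direction, suppose $G_n(t \mid \omega) \toP F(t)$ for each $t$; the goal is to establish $\sup_t |G_n(t \mid \omega) - H_n(t)| \toP 0$. By the triangle inequality and the deterministic uniform bound $\sup_t|H_n(t) - F(t)| \to 0$, it suffices to show $\sup_t |G_n(t \mid \omega) - F(t)| \toP 0$. This is the main step: upgrading pointwise convergence in probability of the random CDFs $G_n(\cdot \mid \omega)$ to uniform convergence in probability, using only that the deterministic limit $F$ is continuous.

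The argument is a Polya-style discretization. Fix $\delta > 0$. By continuity of $F$, choose a finite grid $-\infty = t_0 < t_1 < \cdots < t_K = \infty$ with $F(t_k) - F(t_{k-1}) < \delta$. Monotonicity of $G_n(\cdot \mid \omega)$ yields, for any $t \in [t_{k-1}, t_k]$,
\begin{equation*}
G_n(t_{k-1} \mid \omega) - F(t_{k-1}) - \delta \;\leq\; G_n(t \mid \omega) - F(t) \;\leq\; G_n(t_k \mid \omega) - F(t_k) + \delta,
\end{equation*}
so $\sup_t |G_n(t \mid \omega) - F(t)| \leq \max_{0 \leq k \leq K} |G_n(t_k \mid \omega) - F(t_k)| + \delta$. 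The maximum is over finitely many random variables, each tending to zero in $\P_\theta$-probability by hypothesis, so the maximum does as well. Sending $\delta \downarrow 0$ after taking $\limsup_n \P_\theta(\sup_t|\cdots| > 2\delta) = 0$ completes the step. The main obstacle is precisely this discretization upgrade; everything else is triangle-inequality bookkeeping combined with Polya's theorem applied to the deterministic sequence $H_n$.
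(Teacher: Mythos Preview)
The paper does not actually supply a proof of this lemma: it is quoted directly from \citet{van2000asymptotic}, Eq.~(23.2), and the appendix section ``Proofs for Bootstrap Theory'' contains only the proof of Theorem~\ref{PBconsistency} and the auxiliary Lemma~\ref{lem:seq-conv}. So there is no paper proof to compare against.

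Your argument is correct and is essentially the standard one. Polya's theorem handles the deterministic sequence $H_n$, giving $\sup_t |H_n(t)-F(t)|\to 0$; the nontrivial direction then reduces to showing $\sup_t |G_n(t\mid\omega)-F(t)|\toP 0$ from pointwise convergence in probability. Your discretization step is the right idea: monotonicity of each $G_n(\cdot\mid\omega)$ plus continuity of $F$ lets a finite grid control the supremum, and a finite maximum of $o_P(1)$ terms is $o_P(1)$. The endpoint conventions $t_0=-\infty$, $t_K=+\infty$ are harmless since $G_n$ and $F$ agree there. One small presentational point: in the final line you might make explicit that for any fixed $\delta$ you have shown $\P_\theta(\sup_t|G_n-F|>2\delta)\to 0$, and since $\delta>0$ was arbitrary this is exactly the definition of convergence in probability to zero; the phrase ``sending $\delta\downarrow 0$ after taking $\limsup$'' slightly obscures that no limit in $\delta$ is actually needed.
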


Consistency is also preserved under continuous mappings: if $\hat\tau^*_n$ is consistent relative to $\sqrt{n}(\hat\tau_n - \tau) \toD T$ and $g$ is continuous, then $g(\hat\tau^*_n)$ is consistent relative to $\sqrt{n}\big(g(\hat\tau_n) - g(\tau)\big)$~\citep{beran1997diagnosing}. In our applications we will show consistency of a bootstrap estimator $\hat\theta^*_n$ for the full parameter vector $\theta$, which implies consistency for continuous functions of $\theta$; a simple application is selecting one entry and constructing a confidence interval.

\subsection{Confidence interval consistency}

Bootstrap consistency implies consistent confidence intervals. The confidence interval $[\hat{a}_n, \hat b_n]$ for $\tau=\tau(\theta)$ is (conservatively) asymptotically consistent at level $1 - \alpha$ if, for all $\theta$,
\begin{equation}
  \liminf_{n \to \infty} \P_\theta \left(\hat a_{n} \leq \tau \leq \hat b_{n}\right) \geq 1 - \alpha.
\end{equation}

\begin{lemma}[\citealt{van2000asymptotic}, Lemma 23.3]
Suppose $\sqrt{n}(\tau_n - \tau) \toD T$ for a random variable $T$ with continuous distribution function and $\tau^*_n$ is consistent. Then the pivotal intervals are consistent, and, if $T$ is symmetrically distributed around zero, then Efron's percentile intervals are consistent. When the analogous conditions hold for the studentized pivot $(\hat\tau_n - \tau)/\hat\sigma_n$, studentized intervals are consistent.
\end{lemma}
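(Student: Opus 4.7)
The plan is to reduce all three cases to a single asymptotic-coverage computation for the pivotal interval, then deduce the other two. For the pivotal interval, the event $\hat a_n \leq \tau \leq \hat b_n$ rewrites by definition as
\[
\sqrt{n}\,\hat\xi_{1-\alpha/2} \;\leq\; \sqrt{n}(\hat\tau_n - \tau) \;\leq\; \sqrt{n}\,\hat\xi_{\alpha/2}.
\]
The hypothesis $\sqrt{n}(\hat\tau_n - \tau) \toD T \sim F$ handles the middle term, so the task reduces to showing that the random scaled bootstrap quantiles $\sqrt{n}\,\hat\xi_\gamma$ converge in $\P_\theta$-probability to the deterministic limit quantile $\xi_\gamma := F^{-1}(1-\gamma)$ for $\gamma \in \{\alpha/2,\,1-\alpha/2\}$. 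Combining the bootstrap consistency assumption with the assumed limit of the pivot implies (via the triangle inequality and Polya's theorem, using continuity of $F$) that the conditional CDF of $\sqrt{n}(\hat\tau^*_n - \hat\tau_n)$ converges uniformly in $t$ to $F$ in $\P_\theta$-probability; standard CDF inversion then yields $\sqrt{n}\,\hat\xi_\gamma \toP \xi_\gamma$. A final Slutsky plus continuous-mapping step (the indicator of the bracketing event is continuous at the limit because $T$ has continuous law) delivers
\[
\P_\theta(\hat a_n \leq \tau \leq \hat b_n) \;\to\; F(\xi_{\alpha/2}) - F(\xi_{1-\alpha/2}) \;=\; 1 - \alpha.
\]

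For Efron's percentile interval, directly from the definitions $\hat\zeta_\gamma = \hat\tau_n + \hat\xi_\gamma$, so the interval rewrites as $[\hat\tau_n + \hat\xi_{1-\alpha/2},\,\hat\tau_n + \hat\xi_{\alpha/2}]$ and the coverage event becomes $-\sqrt{n}\,\hat\xi_{\alpha/2} \leq \sqrt{n}(\hat\tau_n - \tau) \leq -\sqrt{n}\,\hat\xi_{1-\alpha/2}$. Rerunning the quantile-convergence plus Slutsky argument gives the limit $F(-\xi_{1-\alpha/2}) - F(-\xi_{\alpha/2})$, which equals $1-\alpha$ precisely when $T$ is symmetric about zero: under symmetry $F(-x) = 1 - F(x)$ and $\xi_{1-\alpha/2} = -\xi_{\alpha/2}$, making Efron's interval asymptotically indistinguishable from the pivotal one. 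For the studentized pivotal interval, I would rerun the same three-step recipe with $\sqrt{n}(\hat\tau_n - \tau)$ replaced by $(\hat\tau_n - \tau)/\hat\sigma_n$, invoking the hypothesized limit distribution and bootstrap consistency for the studentized pivot (the latter following from consistency of $\hat\tau^*_n$ together with $\hat\sigma^*_n \toP \sigma(\theta)$ by continuous mapping).

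The main obstacle is the middle step: passing from uniform CDF convergence of the bootstrap distribution to convergence of its quantiles. Because $F$ is assumed only continuous and not necessarily strictly increasing, the quantile function $F^{-1}$ can have flat pieces at which quantile convergence would fail. The standard workaround is to appeal to the well-known result that uniform convergence of distribution functions to a continuous limit implies convergence of quantiles at every continuity point of the limit quantile function (e.g., van der Vaart, Lemma 21.2), and to note that $\alpha/2$ and $1-\alpha/2$ are such continuity points in every setting of interest here (the limit distributions that arise for exponential families and linear regression SSP have strictly positive densities, so $F^{-1}$ is continuous everywhere on $(0,1)$).
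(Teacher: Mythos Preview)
The paper does not contain a proof of this lemma: it is stated with attribution to \citet[Lemma 23.3]{van2000asymptotic} and used as a black box, so there is no ``paper's own proof'' to compare against. Your proposal is essentially the standard textbook argument (indeed, the one van der Vaart gives): rewrite the coverage event in terms of the pivot, use bootstrap consistency plus continuity of $F$ to get uniform convergence of the bootstrap CDF, invert to get quantile convergence, and finish with Slutsky. The reductions for Efron's percentile interval (via $\hat\zeta_\gamma = \hat\tau_n + \hat\xi_\gamma$ and symmetry of $T$) and for the studentized interval are also the standard ones.

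The only soft spot you correctly flag yourself: the lemma as stated assumes only that $F$ is continuous, not strictly increasing, so $F^{-1}$ may fail to be continuous at $1-\alpha/2$ or $\alpha/2$, and then quantile convergence need not follow from CDF convergence. Your workaround---observing that in all applications of the paper the limit $T$ is Gaussian, hence $F^{-1}$ is everywhere continuous---is adequate for the paper's purposes, but it does mean your argument proves something slightly weaker than the lemma as literally stated. A cleaner fix, if you want the full generality, is to note that the set of $\alpha$ at which $F^{-1}$ is discontinuous is at most countable, so the conclusion holds for all but countably many confidence levels; alternatively, one can argue directly that the coverage probability is sandwiched appropriately even at discontinuity points, yielding the conservative $\liminf \geq 1-\alpha$ conclusion that the paper actually uses.
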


\subsection{Parametric bootstrap consistency}

\citet{beran1997diagnosing} showed that asymptotic equivariance of the main estimator guarantees consistency of the parametric bootstrap. Let $H_n(\theta)$ be the distribution of $\sqrt{n}(\hat\tau_n - \tau(\theta))$ under $\P_\theta$. 
\begin{definition}[Asymptotic equivariance,~\citealt{beran1997diagnosing}]\label{definition:AE}
The estimator $\hat\tau_n$ is asymptotically equivariant if $H_n(\theta + h_n/\sqrt{n})$ converges to a limiting distribution $H(\theta)$ for all convergent sequences $h_n$ and all~$\theta$.
\end{definition}
\begin{restatable}[Parametric bootstrap consistency]{theorem}{PBconsistency}\label{PBconsistency}
Suppose $\sqrt{n}(\hat\theta_n - \theta) \toD J(\theta)$ and $\hat\tau_n$ is asympotitcally equivariant with continuous limiting distribution $H(\theta)$. Then the parametric bootstrap estimator $\hat\tau^*_n$ is consistent.
\end{restatable}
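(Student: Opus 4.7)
I would follow the Beran framework. First, conditional on $\omega$ (which fixes $\hat\theta_n$), the bootstrap sample $x_{1:n}^*$ is i.i.d.\ from $P_{\hat\theta_n}$ and $\hat\tau_n^*$ is the output of the same estimator $\mathcal{A}$; so the conditional law of $\sqrt{n}(\hat\tau_n^* - \tau(\hat\theta_n))$ under $\P_n^*(\cdot \mid \omega)$ is precisely $H_n(\hat\theta_n)$, by the very definition of $H_n$. Using this identification (reading Definition~\ref{def:consistency} under the plug-in convention $\hat\tau_n = \tau(\hat\theta_n)$ so that the bootstrap pivot on the left matches), Lemma~\ref{lem:consistency}---applicable because $H(\theta)$ is assumed continuous---reduces the claim to showing that for every $t$,
$$
H_n(\hat\theta_n)(t) \toP H(\theta)(t).
$$

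Second, I would exploit the tightness implied by $\sqrt{n}(\hat\theta_n - \theta) \toD J(\theta)$. Given $\eta > 0$, pick $M$ with $\P_\theta(A_{n,M}) \geq 1 - \eta$ for all sufficiently large $n$, where $A_{n,M} = \{\|\sqrt{n}(\hat\theta_n - \theta)\| \leq M\}$. On $A_{n,M}$ we may write $\hat\theta_n = \theta + h_n/\sqrt{n}$ with $h_n$ in the compact ball $K = \{h : \|h\| \leq M\}$. The decisive step is to upgrade asymptotic equivariance (Definition~\ref{definition:AE}) from a pointwise statement over convergent sequences to uniform convergence over $K$, via a subsequence/compactness argument: if uniformity failed, a sequence $h_n \in K$ and some $\delta > 0$ would force $|H_n(\theta + h_n/\sqrt{n})(t) - H(\theta)(t)| > \delta$; extracting a convergent subsubsequence $h_{n_k} \to h^\ast \in K$ and invoking Definition~\ref{definition:AE} (whose limit $H(\theta)$ does not depend on the chosen convergent sequence), together with continuity of $H(\theta)$ at $t$, would contradict this. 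Sending $\eta \downarrow 0$ then yields convergence in $\P_\theta$-probability, and the reduction above completes the argument.

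The hard part is precisely this uniformity step: Definition~\ref{definition:AE} gives asymptotic equivariance only for deterministic perturbations, whereas we must evaluate $H_n$ at the random $\sqrt{n}(\hat\theta_n - \theta)$, which is only known to be tight. The subsequence/compactness argument is the standard remedy and is the one substantive ingredient; the rest---identifying the conditional bootstrap law with $H_n(\hat\theta_n)$, reducing via Lemma~\ref{lem:consistency}, and using continuity of $H(\theta)$ to absorb fluctuations---is routine bookkeeping. The continuous-mapping remark following Lemma~\ref{lem:consistency} then also yields consistency of smooth functionals of $\hat\theta^*_n$, as needed in the applications.
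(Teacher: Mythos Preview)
Your proposal is correct and takes essentially the same route as the paper: identify the conditional bootstrap law with $H_n(\hat\theta_n)$, reduce via Lemma~\ref{lem:consistency}, and pass from deterministic convergent perturbations to the random $\hat h_n=\sqrt{n}(\hat\theta_n-\theta)$ using tightness plus a compactness argument. The paper packages this last step as a separate auxiliary lemma (if $g_n(h_n)\to 0$ for every bounded deterministic sequence then $g_n(\hat h_n)\toP 0$ whenever $\hat h_n=O_P(1)$), whose content is exactly your subsequence/uniformity step.
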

All proofs are provided in the appendix.
Furthermore, under reasonably general conditions, the reverse implication is true, with bootstrap failures occurring precisely at those parameter values $\theta_0$ for which asymptotic equivariance does not hold~\citep{beran1997diagnosing}.

\section{Applications}

We apply the parametric bootstrap to three private estimation settings: (i)~exponential families with sufficient statistic perturbation (SSP), (ii)~linear regression with SSP, (iii)~the ``one posterior sample'' (OPS) estimator.

\paragraph{Exponential Families}\label{exponentialfamilies}

A family of distributions is an \emph{exponential family} if $P_\theta$ has a density of the form:
$$
\vspace{-2pt}
p(x; \theta) = h(x) \exp(\theta^T T(x) - A(\theta))
$$
where $h(x)$ is a base measure, $\theta$ is the natural parameter, $T$ is the sufficient statistic function, and $A$ is the log-partition function. Define the log-likelihood function of an exponential family as $\ell(\theta; x) = \log p(x; \theta) - \log h(x) = \theta^T T(x) - A(\theta)$.
The constant term $\log h(x)$ does not affect parameter estimation and is subtracted for convenience. 
For a sample $x_{1:n}$, let $T(x_{1:n}) = \sum_{i=1}^n T(x_i)$. The log-likelihood of the sample is
$$
\ell(\theta; x_{1:n}) = \theta^T T(x_{1:n}) - n A(\theta) := f\big(\theta; T(x_{1:n}) \big),
$$
which depends on the data only through the sufficient statistic $T(x_{1:n})$. The maximum-likelihood estimator (MLE) is
$\hat{\theta} = \argmax_\theta f\big(\theta; T(x_{1:n})\big)$.

A simple way to create a private estimator is sufficient statistic perturbation (SSP); that is, to privatize the sufficient statistics using an elementary privacy mechanism such as the Laplace or Gaussian mechanism prior to solving the MLE problem. 
SSP is a natural choice because $T(x_{1:n})$ is a compact summary of the data and has sensitivity that is easy to analyze, and it often works well in practice~\citep{bernstein2018differentially,foulds2016theory}.
Specifically, it means solving
\begin{equation}
\hat{\theta} = \argmax_\theta f\big(\theta, T(x_{1:n}) + w\big) \tag{SSP-MLE}
\label{SSPMLE}
\end{equation}
where $w$ is a suitable noise vector. This problem has closed form solutions for many exponential families and standard numerical routines apply to others. 
For the Laplace mechanism, $w_j \sim \Lap(\frac{\Delta}{\epsilon})$
\ds{Dropped the factor of 2. BTW, what do we do in our code?}\cf{we use the range of a surrogate sample of size 1000 (the 2x was an old strategy to stay conservative, we dropped in the code long ago)}
for all $j$, where $\Delta = \sum_{j} \width(T_j)$ is an upper bound on the $L_1$ sensitivity of $T(x_{1:n})$ by Claim~\ref{claim:additive}. If $\width(T_j)$ is not known or is unbounded, the analyst must supply bounds and guarantee they are met, e.g., by discarding data points that don't meet the bounds, or clamping them to the bounded interval.

\begin{restatable}{theorem}{MLEconsistency}\label{MLEconsistency}
Let $\hat{\theta}_n$ be the solution to the (SSP-MLE) optimization problem for a sample $x_{1:n}$ from an exponential family model that satisfies the regularity conditions given in \citet[Section 4.4.2]{davison2003statistical}. Then $\sqrt{n}(\hat{\theta}_n - \theta)$ is asymptotically equivariant with limiting distribution $\N(0, I(\theta)^{-1})$, where $I(\theta) = \nabla^2 A(\theta)$ is the Fisher information. This implies consistency of the parametric bootstrap estimator $\hat\theta^*_n$.
\end{restatable}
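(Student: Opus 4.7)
The plan is to verify the hypotheses of Theorem~\ref{PBconsistency} by deriving the limit distribution of the SSP-MLE via its first-order condition, then arguing that the limit is stable under $O(1/\sqrt n)$ parameter perturbations. First I would write down the optimality condition: since $f(\theta, S) = \theta^T S - n A(\theta)$, the SSP-MLE satisfies
\[
\nabla A(\hat\theta_n) \;=\; \bar T_n + \tfrac{1}{n}\, w, \qquad \bar T_n := \tfrac{1}{n}T(x_{1:n}),
\]
and recall the exponential-family identities $\E_\theta[T(x_i)] = \nabla A(\theta)$ and $\Var_\theta(T(x_i)) = \nabla^2 A(\theta) = I(\theta)$. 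Under the Davison regularity conditions, $I(\theta)$ is positive definite and continuous, so a Taylor expansion of $\nabla A$ about $\theta$ gives
\[
\sqrt n (\hat\theta_n - \theta) \;=\; I(\theta)^{-1}\sqrt n\bigl(\bar T_n - \nabla A(\theta)\bigr) \;+\; I(\theta)^{-1}\tfrac{w}{\sqrt n} \;+\; o_P(1).
\]

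Next I would invoke the CLT on the first term, which yields $I(\theta)^{-1}\cdot \N(0, I(\theta))\cdot I(\theta)^{-1} = \N(0, I(\theta)^{-1})$. The second term is $o_P(1)$: the Laplace noise $w$ has a distribution that does not depend on $n$ (the sensitivity of $T(x_{1:n})$ is constant by Claim~\ref{claim:additive}), so $w/\sqrt n \toP 0$. By Slutsky, $\sqrt n(\hat\theta_n - \theta) \toD \N(0, I(\theta)^{-1})$, which is $H(\theta)$ and is continuous in $\theta$.

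The main work is upgrading this from ordinary asymptotic normality to the asymptotic equivariance of Definition~\ref{definition:AE}, that is, establishing $H_n(\theta + h_n/\sqrt n) \to H(\theta)$ for every convergent sequence $h_n$. The key observation is that the privacy noise $w$ is drawn independently of $\theta$ and of the sample, so the same Taylor decomposition applies verbatim under $\P_{\theta_n}$ with $\theta_n = \theta + h_n/\sqrt n$; one then uses the continuity of $\nabla A$, $I$, and $\Var_\theta(T)$ together with contiguity of $\{\P_{\theta_n}\}$ to $\P_\theta$ (which follows from LAN for regular exponential families) to conclude that both the CLT term and the $o_P(1)$ remainders retain the same limits under the perturbed measures. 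This is the step I expect to be the main obstacle, since it is where the regularity conditions and local asymptotic normality really do the work; everything else is a direct computation in the exponential family. With asymptotic equivariance of $\hat\theta_n$ established and $\sqrt n(\hat\theta_n - \theta) \toD \N(0, I(\theta)^{-1})$ in hand, Theorem~\ref{PBconsistency} (with $\tau(\theta) = \theta$) immediately gives consistency of the parametric bootstrap estimator $\hat\theta^*_n$.
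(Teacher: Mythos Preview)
Your decomposition via the first-order condition $\nabla A(\hat\theta_n) = \bar T_n + w/n$ and the Taylor expansion is essentially what the paper does (the paper writes it as a Taylor expansion of $\lambda_n'$ about $\theta + h_n/\sqrt n$ in the scalar case, but it is the same computation). The handling of the privacy term $w/\sqrt n \toP 0$ matches the paper's Lemma~\ref{lemma:noisevar}.

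The substantive difference, and the place where your proposal has a gap, is the equivariance step. You write that contiguity of $\{\P_{\theta_n}\}$ to $\P_\theta$ will ensure ``the CLT term and the $o_P(1)$ remainders retain the same limits under the perturbed measures.'' Contiguity does preserve $o_P(1)$ remainders, but it does \emph{not} preserve weak limits: under contiguous alternatives the limit generally shifts (this is precisely the content of Le Cam's third lemma). Concretely, under $\P_{\theta_n}$ the CLT term $\sqrt n(\bar T_n - \nabla A(\theta_n))$ is a triangular array with row-dependent law, and contiguity alone says nothing about its limit. You can salvage your route by invoking Le Cam's third lemma explicitly---compute the asymptotic covariance between $\sqrt n(\bar T_n - \nabla A(\theta))$ and the log-likelihood ratio, observe the resulting mean shift exactly cancels the recentering from $\nabla A(\theta)$ to $\nabla A(\theta_n)$, and conclude $\sqrt n(\bar T_n - \nabla A(\theta_n)) \toD \N(0, I(\theta))$ under $\P_{\theta_n}$. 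The paper instead bypasses the LAN machinery entirely and applies the Lindeberg--Feller CLT directly to the triangular array $\{T(x_i) - \nabla A(\theta_n)\}_{i\le n}$ under $\P_{\theta_n}$, verifying the Lindeberg condition by dominated convergence. The paper's approach is more elementary and self-contained; your LAN route is higher-level and would extend more readily beyond exponential families, but as written it needs Le Cam's third lemma rather than bare contiguity.
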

\ds{Give reference to specific appendix?}

\paragraph{Linear Regression}\label{linearregression}
\label{sec:OLS}

We consider a linear regression model where we are given $n$ pairs\footnote{We use boldface for vectors as needed to distinguish from scalar quantities.}  $(\x_i, y_i)$ with $\x_i \in \R^p$ and $y_i \in \R$ assumed to be generated as
$
y_i = \beta^T \x_i + u_i,
$
where the errors $u_i$ are i.i.d., independent of $\x_i$, zero-mean, and have finite variance $\sigma^2$, and the $\x_i$ are i.i.d. with $\E[\x \x^T] = Q$. We wish to estimate the regression coefficients $\beta \in \R^p$. Let $X \in \R^{n \times p}$ be the matrix with $i$th row equal to $\x_i^T$ and $\y, \u \in \R^N$ be the vectors with $i$th entries $y_i$ and $u_i$, respectively. The \emph{ordinary least squares} (OLS) estimator is:
\begin{equation}
  \hat \beta = (X^{T}X)^{-1} X^{T} \y.
  \label{eq:OLS}
\end{equation}


Like the MLE in exponential families, Eq.~\eqref{eq:OLS} depends on the data only through sufficient statistics $X^T X$ and $X^T \y$, and
SSP is a simple way to privatize the estimator that works very well in practice~\citep{wang2018revisiting}. The privatized estimator is
\begin{equation} \label{eq:privnorm}
    \betapriv = (X^{T}X + V)^{-1} (X^{T} \y + \w),
    \tag{SSP-OLS}
\end{equation}
where $V \in \R^{p \times p}$ and $\w \in \R^p$ are additive noise variables drawn from distributions $P_V$ and $P_w$ to ensure privacy.

For the Laplace mechanism, we use \ds{This paragraph could go to appendix, maybe along with details of $\hat\sigma^2$? We do need to mention $\hat\sigma^2$; the properties we need are that it is private and consistent.}
\begin{align}
V_{jk} &\sim \Lap(0, \Delta_{V} /\epsilon_1) \text{ for } j \leq k, \text{ and } V_{kj} = V_{jk}, \label{Vterm} \\
w_j &\sim \Lap(0, \Delta_{w}/ \epsilon_2) \text{ for all $j$}, \label{Wterm} 
\end{align}
where $\Delta_{V}$ and $\Delta_{w}$ bound the $L_1$ sensitivity of $V$ and $w$, respectively. The result is $(\epsilon_1 + \epsilon_2)$-DP. Because $X^TX = \sum_{i=1}^n \x_i \x_i^T$ and $X^T \y = \sum_{i=1}^n \x_i y_i$ are additive, we can take $\Delta_V = \sum_{j \leq k} \width(x_j)\cdot\width(x_k)$ and $\Delta_w = \sum_j \width(x_j)\cdot\width(y)$, where $\width(x_j)$ and $\width(y)$ are widths of the $j$th feature and response variable, respectively, which are enforced by the modeler.

For confidence intervals, we will also need a private estimate of $\sigma^2$: let $\hat{\sigma}^2 = (n-p)^{-1}\sum_{i=1}^n (y_i - \hat{\beta}^T \x_i)^2 + \Lap(0, \Delta_z / \epsilon_3)$ where $\Delta_z = \width((y - \hat{\beta}^T \x)^2)$. \ds{TODO} The released values for SSP-OLS are then $(X^TX + V, X^T \y + w, \hat\beta, \hat\sigma^2)$, which satisfy $(\epsilon_1 + \epsilon_2 + \epsilon_3)$-DP.

\paragraph{Limitations of parametric bootstrap for private regression}
The parametric bootstrap is more difficult to apply to regression problems in a private setting due to the covariates. It is typical to bootstrap conditioned on $X$, which means simulating new response variables $\y$ from a parametric distribution $p(\y | X; \hat{\beta}, \hat{\sigma}^2)$, where $\hat\beta$ and $\hat\sigma^2$ are (privately) estimated parameters, and a fully parametric distribution $p(u; \sigma^2)$ is assumed for errors. A bootstrap replicate would look like $\hat\beta^* = (X^TX)^{-1}X^T\y^*$ with $\y^* = X\hat\beta + \u^*$ and $\u^*$ simulated form the error distribution. 
The challenge is that $X$ is accessed to generate each replicate, so to make it differentially private would require additional randomization and consume privacy budget. An alternative would be to posit a model $p(\x; \theta)$ and perform the parametric bootstrap with respect to a joint model $p(\x, y ; \theta, \beta, \sigma^2)$, but the additional demand to model covariates is unappealing in a regression context.

\paragraph{Hybrid parametric bootstrap for OLS}
We propose a novel hybrid approach that avoids the need to repeatedly access covariate data or to model the covariate or error distributions explicitly. Conceptually, we use the part of the standard asymptotic analysis that ``works well'' to approximate the relevant statistics of the covariate data, and use the parametric bootstrap to deal with the noise added for privacy.
Following standard analysis for OLS, we can substitute $\y = X \beta + \u$ in \eqref{eq:privnorm} and scale terms to get:
\begin{equation}
\begin{split}
\label{eq:beta_hat}
\sqrt{n}\hat{\beta}_n  &= 
\sqrt{n}\left(\check Q_n + \frac{1}{n}V \right)^{-1} \check Q_n  \beta  +
\\ &+ \left(\check Q_n + \frac{1}{n}V \right)^{-1} \left(\blue{\frac{1}{\sqrt{n}} X^T \mathbf{u}} + \frac{1}{\sqrt{n}} w \right), \\ \,\, \check Q_n &= \frac{1}{n} X^T X.
\end{split}
\end{equation}

This expression is instructive to see the different sources of randomness that contribute to the variability of $\hat\beta_n$:
the terms $\check Q_n = \frac{1}{n}X^TX$ and $\frac{1}{\sqrt{n}}X^T \u$ are due to data variability, and $\frac{1}{n}V$ and $\frac{1}{\sqrt{n}}w$ are due to privacy.
We form a bootstrap estimator $\hat\beta^*_n$ that treats $(\hat\beta_n, \beta)$ analogously to $(\hat\beta^*_n, \hat\beta_n)$ and simulates the different sources of variability using the best available information about their distributions:
\begin{equation}
\begin{split}
\label{eq:beta_hat_star}
  \sqrt{n}\hat{\beta}^*_n &= \sqrt{n}\left(\hat{Q}_n + \frac{1}{n}V^* \right)^{-1} \hat Q_n \hat{\beta}_n + 
  \\ &+ \left(\hat{Q}_n + \frac{1}{n}V^* \right)^{-1} \left(\blue{Z^*_n} + \frac{1}{\sqrt{n}} w^* \right), \\
  \notag \blue{Z_n^*} &\sim \N(0, \hat\sigma^2_n \hat Q_n), \quad V^* \sim P_V, \quad w^* \sim P_w, \\
  \notag \hat Q_n &= \frac{1}{n}X^TX + \frac{1}{n}V. 
 \end{split}
\end{equation}

All privacy terms in Eq.~\eqref{eq:beta_hat} are simulated from their exact distributions in Eq.~\eqref{eq:beta_hat_star}. The variables $\check Q_n$ and $\hat Q_n$ represent approximations of $Q = \E[\x\x^T]$ available to the corresponding estimator. Both quantities converge in probability to $Q$. Our choice not to simulate variability in these estimates due to the covariates is analogous to the ``fixed $X$'' bootstrap strategy for regression problems~\citep{fox2002r}; we \emph{do} simulate the variability due to privacy noise added to the estimates. The blue terms represent contributions to estimator variability due to interactions between covariates and unobserved noise variables. In a traditional bootstrap, we might simulate this term in Eq.~\eqref{eq:beta_hat_star} as $\frac{1}{\sqrt{n}}X^T \u^*$ where $\u^*$ are simulated errors, but, as described above, we do not wish to access $X$ within the bootstrap procedure. Instead, because we know $\frac{1}{\sqrt{n}}X^T \u \toD \N(0, \sigma^2 Q)$ by the central limit theorem,\footnote{This is a standard result of OLS asymptotics and is expected to be accurate for modest sample sizes.} and because $\sigma^2$ and $Q$ are estimable, we simulate this term directly from the normal distribution with estimated parameters. 

\begin{restatable}{theorem}{OLS}\label{OLS}
  \label{thm:OLS}
  The private estimator satisfies $\sqrt{n}(\hat\beta_n - \beta) \toD \N(0, \sigma^2 Q^{-1})$ and the bootstrap estimator $\hat\beta^*_n$ is consistent in the sense of Lemma~\ref{lem:consistency}.
\end{restatable}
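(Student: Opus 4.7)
}

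The plan is to handle the two claims in sequence, both by the same ``rewrite--and--apply--Slutsky'' strategy, using the algebraic identities that are already set up in Eqs.~\eqref{eq:beta_hat} and~\eqref{eq:beta_hat_star}. The core facts I will lean on are standard OLS ingredients: $\check Q_n = \tfrac{1}{n}X^T X \toP Q$ by the law of large numbers, $\tfrac{1}{\sqrt n} X^T \u \toD \N(0, \sigma^2 Q)$ by the central limit theorem, and privacy noise $V, w$ drawn from distributions whose parameters do \emph{not} scale with $n$, so $\tfrac{1}{n} V \toP 0$, $\tfrac{1}{\sqrt n} w \toP 0$, and $\tfrac{1}{\sqrt n} V \beta \toP 0$. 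I will also use that $\hat\sigma^2_n \toP \sigma^2$ (the Laplace noise in the private variance estimator shrinks and the non-private part is consistent for $\sigma^2$ under the regression model).

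For the first claim, I substitute $\y = X\beta + \u$ into Eq.~\eqref{eq:privnorm} and rearrange to get
\begin{equation*}
\sqrt n(\hat\beta_n - \beta) = \Bigl(\check Q_n + \tfrac{1}{n} V\Bigr)^{-1}\Bigl(\tfrac{1}{\sqrt n} X^T \u + \tfrac{1}{\sqrt n} w - \tfrac{1}{\sqrt n} V \beta\Bigr).
\end{equation*}
The prefactor converges in probability to $Q^{-1}$ by the continuous mapping theorem, the privacy terms vanish, and the remaining $\tfrac{1}{\sqrt n} X^T \u$ converges in distribution to $\N(0, \sigma^2 Q)$. Slutsky's theorem then yields $\sqrt n(\hat\beta_n - \beta) \toD \N(0, \sigma^2 Q^{-1})$.

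For the second claim, I subtract $\sqrt n \hat\beta_n$ from both sides of Eq.~\eqref{eq:beta_hat_star}, use the algebraic identity $(A)^{-1} B - I = -(A)^{-1}(A - B)$, and obtain, after simplification,
\begin{equation*}
\sqrt n(\hat\beta^*_n - \hat\beta_n) = \Bigl(\hat Q_n + \tfrac{1}{n} V^*\Bigr)^{-1}\Bigl(Z^*_n + \tfrac{1}{\sqrt n} w^* - \tfrac{1}{\sqrt n} V^* \hat\beta_n\Bigr).
\end{equation*}
This mirrors the expansion in the previous paragraph, with privacy variables replaced by their bootstrap counterparts $V^*, w^*$ and the covariate--error interaction $\tfrac{1}{\sqrt n} X^T \u$ replaced by $Z^*_n$. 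Conditional on the outer sample $\omega$, the quantities $\hat Q_n$, $\hat\beta_n$, $\hat\sigma^2_n$ are constants in the inner probability space; outside a $\P_\theta$-null event they take values with $\hat Q_n \to Q$, $\hat\beta_n \to \beta$, and $\hat\sigma^2_n \to \sigma^2$. Under the inner measure $\P^*_n(\cdot \mid \omega)$, the fresh noise satisfies $\tfrac{1}{n} V^* \to 0$, $\tfrac{1}{\sqrt n}(w^* - V^* \hat\beta_n) \to 0$, and $Z^*_n$ is exactly Gaussian with covariance $\hat\sigma^2_n \hat Q_n \to \sigma^2 Q$. A second application of Slutsky's theorem (in the inner space, conditional on $\omega$) then gives $\sqrt n(\hat\beta^*_n - \hat\beta_n) \toD \N(0, \sigma^2 Q^{-1})$ in $\P^*_n(\cdot \mid \omega)$ for $\P_\theta$-almost every $\omega$. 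Because the target limit has a continuous distribution function, pointwise convergence of the conditional CDFs upgrades to uniform convergence (Polya's theorem), and Lemma~\ref{lem:consistency} delivers bootstrap consistency.

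The main obstacle is the bookkeeping between the inner and outer probability spaces: quantities such as $\hat Q_n$, $\hat\beta_n$, and $\hat\sigma^2_n$ are random with respect to $\P_\theta$ but \emph{fixed} parameters when we reason in the inner space, so Slutsky must be applied conditionally on $\omega$ with a.s.\ convergent parameters, and the final conclusion is a convergence-in-probability statement over~$\omega$. A small technical point is verifying that the Laplace-based $\hat\sigma^2_n$ is indeed consistent for $\sigma^2$ under our sensitivity assumptions; this follows because the non-private residual variance is consistent by standard OLS arguments and the added Laplace noise has fixed scale, hence is $o_P(1)$.
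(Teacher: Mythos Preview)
Your argument for the first claim is correct and matches the paper's: both rewrite $\sqrt{n}(\hat\beta_n-\beta)$ as a matrix prefactor times a sum of a CLT term and vanishing privacy terms, then apply Slutsky.

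For bootstrap consistency, your strategy is the same in spirit but there is a real gap in the inner/outer bookkeeping. You assert that ``outside a $\P_\theta$-null event'' the sequences $\hat Q_n,\hat\beta_n,\hat\sigma^2_n$ converge to $Q,\beta,\sigma^2$; that is an \emph{almost sure} statement, and you have only established convergence in probability. Conditioning on a fixed $\omega$ and running Slutsky in the inner space requires, for that $\omega$, that the deterministic sequence $(\hat Q_n(\omega),\hat\beta_n(\omega),\hat\sigma^2_n(\omega))$ actually converge, which in-probability convergence does not give you pathwise. The paper closes this gap differently: it first proves the conditional limit for every \emph{deterministic} sequence $(Q_n,\beta_n,\sigma^2_n)\to(Q,\beta,\sigma^2)$ (packaged as a general lemma on the map $\B_n$), and then invokes a separate transfer lemma (their Lemma~\ref{lem:key_lemma_2}) stating that if $g_n(h_n)\to c$ for all deterministic $h_n\to h$, then $g_n(\hat h_n)\toP c$ whenever $\hat h_n\toP h$. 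This is exactly the device that bridges ``Slutsky for fixed parameters'' to ``conditional convergence in $\P_\theta$-probability'' without ever needing a.s.\ convergence of the outer estimators. Your route can be repaired either by supplying that transfer lemma, or by actually proving a.s.\ convergence of $\hat Q_n,\hat\beta_n,\hat\sigma^2_n$ (which needs extra work, especially for $\hat\sigma^2_n$).

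One small slip: you write that the Laplace noise in $\hat\sigma^2_n$ ``has fixed scale, hence is $o_P(1)$.'' Fixed-scale noise is $O_P(1)$, not $o_P(1)$; the reason the privacy term vanishes is that the sensitivity of the \emph{average} residual sum of squares is $O(1/n)$, so the effective noise scale is $O(1/n)$.
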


\paragraph{OPS}
\label{sec:OPS}
\cite{Dimitrakakis:2014aa}, \cite{Wang:2015aa} and \cite{foulds2016theory} used the idea of sampling from a Bayesian posterior distribution to obtain a differentially private point estimate.
One Posterior Sampling (OPS), which releases one
sample from the posterior, is a special case of the exponential mechanism, and the corresponding estimator is near-optimal for parametric learning \citep{Wang:2015aa}.
The parametric bootstrap applies easily to OPS estimators and produces well calibrated intervals (Figure~\ref{fig:fig2}).
We expect the asymptotic analysis of \cite{Wang:2015aa} can be adapted to prove asymptotic equivariance, and hence parametric bootstrap consistency, for OPS, but do not give a formal proof.

\section{Related work}
\label{sec:related}
A number of prior works have studied private confidence intervals for different models~\citep{d2015differential,karwa2017finite,Sheffet:2017aa,Barri:2018,gaboardi2018locally,brawner2018bootstrap,du2020differentially}.
\citet{smith2011privacy} showed that a broad class private estimators based on subsample \& aggregate~\citep{nissim2007smooth} are asymptotically normal.
\cite{d2015differential} proposes an algorithm based on subsample \& aggregate to approximate the variance of a private estimator (see Section~\ref{significance}).
The topics of differentially private hypothesis testing \citep{vu2009differential,solea2014differentially,gaboardi2016differentially,couch2019differentially}
and Bayesian inference~\citep{Williams:2010aa,Dimitrakakis:2014aa,Wang:2015aa,foulds2016theory,ZZhang:2016aa,heikkila2017differentially,bernstein2018differentially,bernstein2019differentially} are also related, but the specific considerations differ somewhat from confidence interval construction. 
Finding practical and general-purpose algorithms for differentially private confidence intervals has been identified as an important open problem~\citep{opendp}.

The confidence interval approach of \citet{wang2019differentially} applies to any model fit by empirical risk minimization with objective or output perturbation and is similar to the asymptotic methods we compare to in Section~\ref{sec:experiments}.
\citet{evans2019statistically} also give a general-purpose procedure based on subsample \& aggregate (S\&A)~\citep{nissim2007smooth} with normal approximations.
This method also uses S\&A for the point estimates.
We compare to a similar variant of S\&A in Section~\ref{sec:experiments}.
\cite{wang2018statistical} study statistical approximating distributions for differentially private statistics in a general setting.


\citet{brawner2018bootstrap} use the \emph{non-parametric} bootstrap in a privacy context to estimate standard errors ``for free'' (at no additional cost beyond mean estimation) in some settings.
Other methods most similar to the our work on the parametric bootstrap were discussed in more detail in Sec.~\ref{significance}.

Prior methods to construct confidence intervals for private linear regression include \citep{Sheffet:2017aa,Barri:2018}.


\section{Experiments}\label{sec:experiments}
\begin{figure}[h!]
    \centering
    \includegraphics[width=1.02\linewidth]{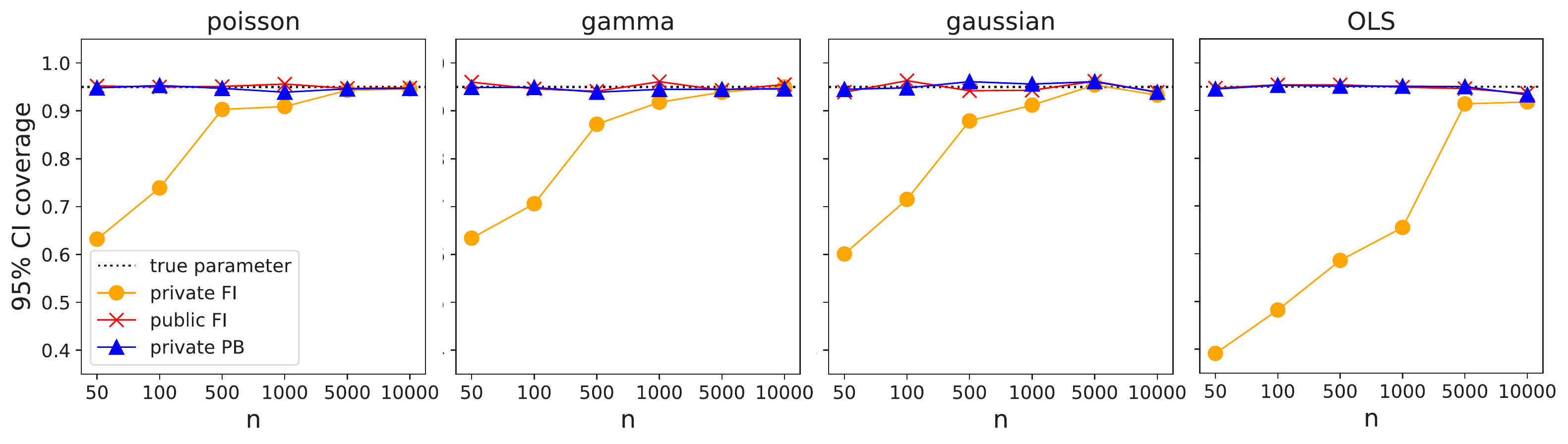}
    \caption{Observed vs nominal coverage of 95\% CIs for different distributions for different $n$.}
    \label{fig:1a}
\end{figure}

\begin{figure*}
    \centering
    \includegraphics[width=\linewidth]{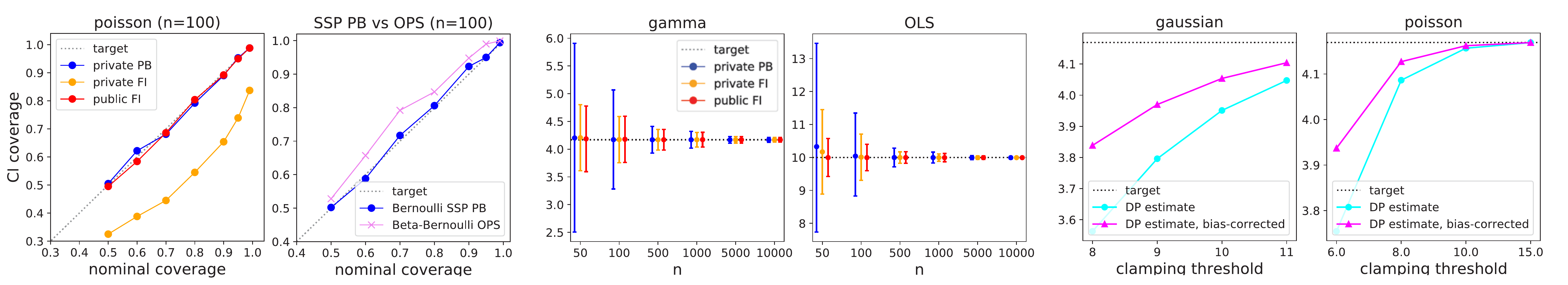}
    \caption{Left pair. (i) observed vs. nominal coverage for different coverage levels, for a Poisson, n=100, $\epsilon = 0.5$. (ii) same plot, comparing the OPS method \citep{foulds2016theory} and the parametric bootstrap for Bernoulli estimation. Center pair: average CI widths for different $n$ for (i) Gamma and (ii) OLS (other distributions give qualitatively similar results). Width of the private bootstrap CIs approaches that of the public CIs as $n \rightarrow \infty$. Right pair. (i) private and bias-corrected private estimates for a Poisson clamped at varying right-tail thresholds. (ii) same for a Gaussian clamped at $-10$ on the left tail and at varying thresholds on the right tail.}
    \label{fig:fig2}
\end{figure*}


We design synthetic experiments to demonstrate our proposed methods for differentially private confidence interval estimation.

First, we evaluate the performance of private parametric bootstrap CIs vs a baseline method (``Fisher CIs'') based on asymptotic normality of the private estimator and described in more detail below. Performance is measured by how well the coverage of the private CIs matches the nominal coverage. For all models, we also include Fisher CIs of non-private estimators for comparison. 

Second, we demonstrate the bias-correction procedure in Sec.~\ref{CIsBiasCorrection} in the case of Gaussian and Poisson distributions with data points clamped to different thresholds, which introduces estimation bias. These results show the effectiveness of the parametric bootstrap at approximating and mitigating the bias of differentially private estimates when sensitivity is bounded by forcing data to take values within given bounds.

Third, we compare parametric bootstrap CIs to another general purpose method to construct confidence intervals based on subsample \& aggregate~\citep{nissim2007smooth,smith2011privacy,d2015differential}.

Finally, the appendix includes additional experiments exploring a broader range of settings and performance metrics. These include: multivariate distributions, the effect of varying $\epsilon$, and measurements of the upper- and lower-tail CI failures. We aim for private CIs to be as tight as possible while providing the correct coverage: in the appendix, we also compare the width of our intervals with that of intervals from existing methods for the specific case of Gaussian mean estimation of known variance. 

\newcommand{\myparagraph}[1]{\smallskip\noindent \textbf{#1}\ }

\myparagraph{Baseline: ``Fisher CIs''}
As a byproduct of our consistency analysis we also derive asymptotic normal distributions of the private estimators for both exponential families (Theorem~\ref{MLEconsistency}) and OLS (Theorem~\ref{thm:OLS}).
In each case, we obtain a private, consistent estimate $\hat \sigma^2_j$ of the $j$th diagonal entry of the inverse Fisher information matrix of the private estimator $\hat\theta_n$, and then construct the confidence interval for $\theta_j$ as
\begin{equation}
C_n = \big[\hat{\theta}_{n,j} - z_{\alpha/2} \hat{\sigma}_j, \hat{\theta}_{n,j} + z_{\alpha/2} \hat{\sigma}_j\big],
\label{standardCI}
\end{equation}
where $z_{\gamma}$ is the $1-\gamma$-quantile of the standard normal distribution.
For exponential families, the Fisher information is estimated via plug-in estimation with the private estimator $\hat\theta_n$. For OLS, it is estimated via plugging in private estimates $\hat Q_n = \frac{1}{n} X^TX + \frac{1}{n}V$ and $\hat \sigma^2_n$, which are both released by the SSP mechanism.  
For non-private Fisher CIs, we follow similar (and very standard) procedures with non-private estimators. 

\myparagraph{Exponential families}
We use synthetic data sets drawn from different exponential family distributions.
Given a family, true parameter $\theta$, and data size $n$, a data set is drawn from $P_{\theta}$. We release private statistics via SSP with the Laplace mechanism. To simulate the modeler's domain knowledge about the data bounds, we draw a \emph{separate} surrogate data set of size 1000 drawn from the same distribution, compute the data range and use it to bound the width of each released statistic. For private estimation, sampled data is clamped to this range. Private $\hat{\theta}$ is computed from the privately released statistics using \ref{SSPMLE}. For the parametric bootstrap CIs, we implement Algorithm \ref{alg:bootstrap} and compute Efron's percentile intervals (see Table~\ref{table:CI-methods}). The output coverage is computed over $T=1000$ trials.

Results are shown in Figures~\ref{fig:1a} and \ref{fig:fig2}.
For the parametric bootstrap, actual coverage closely matches the nominal coverage, even for very small $n$. Coverage of private Fisher CIs is too low until $n$ becomes large, due to the fact that it ignores privacy noise. The bootstrap procedure correctly accounts for the increased uncertainty due to the privacy by enlarging the CIs. The width of the bootstrap intervals approaches the width of the baseline Fisher intervals as $n \rightarrow \infty$. In the appendix, we show that the coverage failures are balanced between left and right tails and examine the effect of increasing $\epsilon$ (which reduces privacy noise and has the same qualitative effect as increasing $n$).


\myparagraph{Linear regression}
We follow a very similar procedure for OLS. Data is generated with $x_j \sim \text{Unif}([-5, 5])$ for all $j$ and errors are $u_i \sim \text{Unif}[-10, 10]$; bounds on $y$ are passed as inputs ($[-150, 150]$) and assumed known. Observed values of $y$ exceeding the given bounds are dropped. These bounds are also used to compute widths for the sensitivity. Private coefficients are estimated with SSP-OLS and bootstrap CIs are constructed via Efron's percentile method. The results are shown in Fig.~\ref{fig:1a}.

\myparagraph{Bias correction}
In the case of distributions with infinite support, one option to bound the sensitivity is to clamp or truncate the data to given bounds. These procedures may induce estimation bias. As discussed in Sec~\ref{CIsBiasCorrection}, the parametric bootstrap can be used to approximate this bias and mitigate it. We demonstrate bias correction on the private estimates and CIs of a Poisson and Gaussian distribution where data is clamped on the right tail at different thresholds (Fig.~\ref{fig:fig2}).


\myparagraph{Comparison with subsample \& aggregate}
We compare the parametric bootstrap CIs with the intervals obtained via a subsample \& aggregate (S\&A) algorithm.
We adapted the S\&A procedure of \cite{d2015differential} for privately estimating standard errors to compute confidence intervals; see Algorithm~\ref{SACI} in the appendix. We compare the accuracy of point estimates and 95\% CIs for the mean of a Gaussian of known variance.  We found that the parametric bootstrap provides more accurate point estimates and better calibrated, tighter CIs than S\&A (Figure~\ref{fig:sa1}).

\begin{figure}[h!]
    \centering
    \includegraphics[scale=0.4]{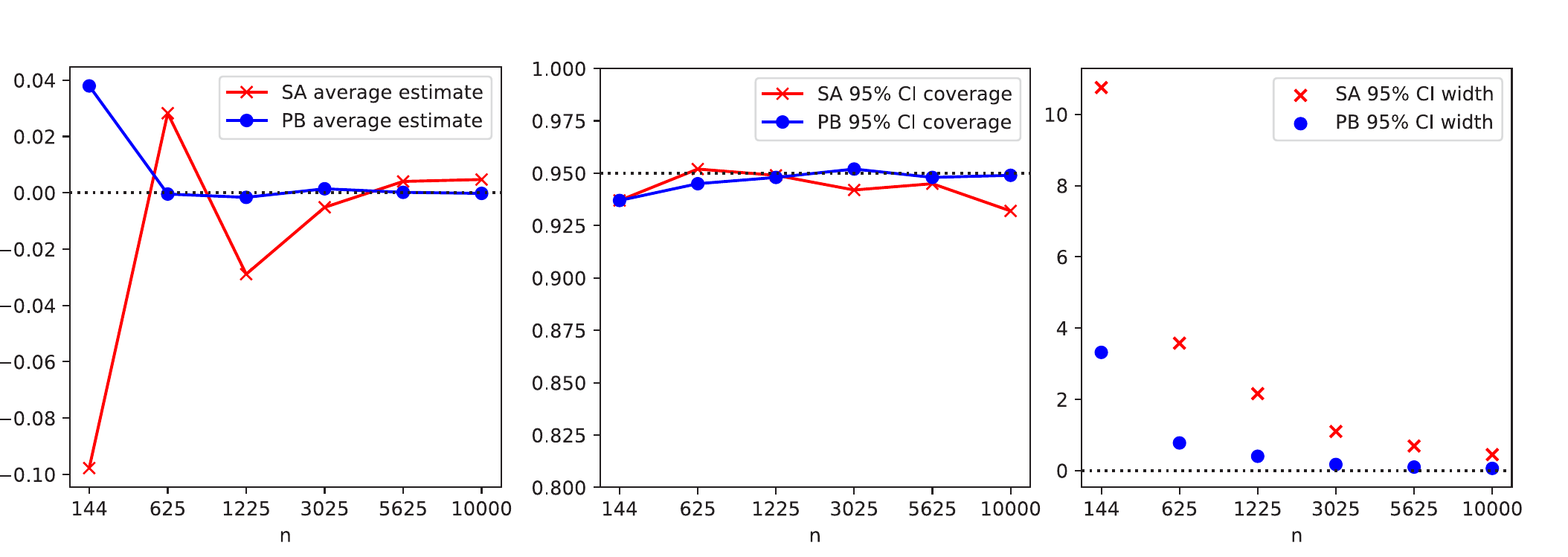}
    \caption{Point estimates (left), 95\% CI coverage (center) and average CI width (right) of the S\&A method  (see Algorithm~\ref{SACI} in appendix) vs  parametric bootstrap for the mean of a Gaussian of known variance. Settings: $\epsilon = 0.5$, $\theta=0$, $\sigma=1, (x_{min}, x_{max}) = (-20, +20), (L_{min}, L_{max}) = (-10, +10), var_{max} = 50$.}
    \label{fig:sa1}
\end{figure}

\section{Conclusion}
The parametric bootstrap is useful and effective to construct consistent differentially private confidence intervals for broad classes of private estimators, including private linear regression, for which we present a novel adaptation to avoid accessing the covariate data many times. The parametric bootstrap yields confidence intervals with good coverage even at modest sample sizes, and tighter than the ones based on subsample \& aggregate or other general methods. It can be used with any privacy mechanism, and can help mitigate differentially private estimation bias.

\newpage
\bibliography{bibliography}

\newpage
\onecolumn
\appendix

\section{Proof of Claim~\ref{claim:additive}}\label{sec:proofs} 

\additive*
\begin{proof}
Since $X$ and $X'$ differ in exactly one element and $f$ is additive, $f(X) - f(X') = g(x)-g(x')$ for some elements $x, x'$ in the data domain. The absolute value of the $j$th output $g_j(x)-g_j(x')$ is bounded by $\width(g_j) = \max_{x^*} g_j(x^*) - \min_{x^*} g_j(x^*)$. The $L_1$ sensitivity $\|f(X) - f(X')\|_1 = \|g(x)-g(x)'\|_1$ is therefore at most the sum of the widths.
\end{proof}

\section{Proofs for Bootstrap Theory}

\PBconsistency*

This theorem is a simplified version of the result of~\cite{beran1997diagnosing}. We give a self-contained proof. See also~\cite[][Problem 23.5]{van2000asymptotic}.

\begin{proof}
The distribution of $\sqrt{n}(\hat \tau_n - \tau)$ under $\P_\theta$ is $H_n(\theta)$, which, by asymptotic equivariance, converges to $H(\theta)$. In the parametric bootstrap, the distribution of $\sqrt{n}(\hat\tau^*_n - \hat\tau_n)$ conditional on $\hat\theta_n = \theta + h_n/\sqrt{n}$ is $H_n(\theta + h_n/\sqrt{n}))$, and, by asymptotic equivariance, $H_n(\theta + h_n/\sqrt{n})) \toD H(\theta)$ if $h_n$ is convergent. Since $H(\theta)$ is continuous this is equivalent to saying that, for all convergent sequences $h_n$ and all $t$
\begin{equation}
\label{eq:converge-deterministic}
\P^*_n\left( \sqrt{n}(\hat\tau^*_n - \hat\tau_n) \leq t \mid \hat\theta_n = \theta + h_n/\sqrt{n}\right) \rightarrow F_\theta(t).
\end{equation}
where $F_\theta$ is the CDF of $H(\theta)$.
Now, let $\hat h_n = \sqrt{n}(\hat\theta_n - \theta)$ so that $\hat\theta_n = \theta + \hat h_n / \sqrt{n}$. By assumption, $\hat h_n \toD J(\theta)$ and is therefore $O_P(1)$. Therefore, by Lemma~\ref{lem:seq-conv}, Eq.~\eqref{eq:converge-deterministic} implies 
$$
\P_n^*\left( \sqrt{n}(\hat\tau^*_n - \hat\tau_n) \leq t \mid \hat\theta_n \right) \rightarrow F_\theta(t) \text{ in $\P_\theta$-probability}
$$
and the result is proved.
\end{proof}

\begin{lemma}\label{lem:seq-conv}
Suppose $g_n$ is a sequence of functions such that $g_n(h_n) \to 0$ for any fixed sequence $h_n = O(1)$. Then $g_n(\hat{h}_n) \toP 0$ for every random sequence $\hat{h}_n = O_P(1)$.
\end{lemma}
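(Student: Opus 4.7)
The plan is to reduce the statement to a deterministic uniform-convergence property of $g_n$ on compact balls, and then combine that with tightness of $\hat h_n$. The key observation is that the hypothesis ``$g_n(h_n)\to 0$ for every bounded deterministic sequence'' is secretly much stronger than it looks: it forces $g_n$ to converge uniformly to zero on every compact set. Once this is in hand, the conclusion follows from a standard $O_P(1)$ decomposition.

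First, I would prove that $\sup_{|h| \leq M} |g_n(h)| \to 0$ for every $M > 0$, by contradiction. Suppose this failed for some $M$; then there exist $\epsilon > 0$, a subsequence $n_k$, and points $h^{(k)}$ with $|h^{(k)}| \leq M$ satisfying $|g_{n_k}(h^{(k)})| \geq \epsilon$. I would then splice these witnesses into a single bounded deterministic sequence by setting $h_{n_k} = h^{(k)}$ on the subsequence and $h_n = 0$ for the remaining indices. Because $|h_n| \leq M$ for all $n$, the sequence is bounded, so by hypothesis $g_n(h_n) \to 0$; but $|g_{n_k}(h_{n_k})| \geq \epsilon$ along the subsequence, a contradiction.

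Second, I would upgrade this uniform statement to the required convergence in probability using tightness. Fix $\epsilon > 0$ and $\gamma > 0$. Since $\hat h_n = O_P(1)$, there exists $M$ with $\P(|\hat h_n| > M) < \gamma$ for all $n$. Then
\begin{equation*}
\P\bigl(|g_n(\hat h_n)| > \epsilon\bigr) \leq \P(|\hat h_n| > M) + \P\bigl(|g_n(\hat h_n)| > \epsilon,\; |\hat h_n| \leq M\bigr).
\end{equation*}
On the event $\{|\hat h_n| \leq M\}$ we have $|g_n(\hat h_n)| \leq \sup_{|h|\leq M}|g_n(h)|$, which is strictly less than $\epsilon$ for all sufficiently large $n$ by the first step, so the second probability vanishes eventually. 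Hence $\limsup_n \P(|g_n(\hat h_n)| > \epsilon) \leq \gamma$, and since $\gamma$ was arbitrary, $g_n(\hat h_n) \toP 0$. The only delicate point is the contradiction step — one must be careful to place each failing witness $h^{(k)}$ at its original index $n_k$ so as to obtain a single uniformly bounded deterministic sequence violating the hypothesis. Everything else is a routine tightness decomposition.
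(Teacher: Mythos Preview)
Your proof is correct and follows essentially the same two-step route as the paper: first extract from the hypothesis that $\sup_{\|h\|\le M}|g_n(h)|\to 0$ for each $M$, then combine with tightness of $\hat h_n$ via the decomposition $\P(|g_n(\hat h_n)|>\epsilon)\le \P(\|\hat h_n\|>M)+\P(|g_n(\hat h_n)|>\epsilon,\ \|\hat h_n\|\le M)$. Your contradiction argument for the uniform-convergence step (splicing the witnesses $h^{(k)}$ at indices $n_k$ into a single bounded sequence) is in fact more careful than the paper's parenthetical justification, which appeals only to constant sequences $h_n\equiv h$ and thus, read literally, yields only pointwise rather than uniform convergence on the ball.
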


\begin{proof}
Fix $\epsilon, \delta > 0$. We wish to show, for large enough $n$, that
$$\Pr\big[|g_n(\hat{h}_n)| > \epsilon \big] < \delta.$$
Since $\hat{h}_n$ is $O_P(1)$, there is some $M$ such that, for all $n$,
$$
\Pr\big[\|h_n\| > M\big] < \delta.
$$
By our assumption on $g_n$, there is some $N$ such that $|g_n(h)| < \epsilon$ for all $\|h\| \leq M, n > N$ (take the sequence $h_n \equiv h$ for each such $h$). Then, for $n > N$,
$$
\Pr\big[|g_n(\hat{h}_n)| > \epsilon\big] \leq \Pr\big[ \|h_n\| > M \big] < \delta.
$$
\end{proof}

\section{Proofs for Exponential Families}
\label{sec:proofs-expfam}
\begin{lemma}\label{lemma:noisevar}
Let $w$ be any random variable with mean zero and finite variance. For any $r > 0$, $\frac{1}{N^r}w \toP 0$.
\label{varw}
\end{lemma}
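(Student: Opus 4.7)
The plan is to invoke Chebyshev's inequality directly. Since $w$ has mean zero and finite variance $\sigma_w^2 := \Var(w) < \infty$, for any fixed $\epsilon > 0$ and any positive integer $N$, I would write
$$
\P\!\left[\left|\tfrac{w}{N^r}\right| > \epsilon\right]
\;=\; \P\bigl[|w| > \epsilon N^r\bigr]
\;\leq\; \frac{\sigma_w^2}{\epsilon^2 N^{2r}},
$$
using $\E[w] = 0$ in Chebyshev's bound. Since $r > 0$, the right-hand side tends to $0$ as $N \to \infty$, which is exactly the definition of $N^{-r} w \toP 0$.

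There is essentially no obstacle: the hypothesis that $w$ has finite variance is precisely what is needed to apply Chebyshev, and the scaling by $N^{-r}$ with $r > 0$ drives the bound to zero. An equivalent one-line argument would note that a single random variable with finite variance is trivially $O_P(1)$, and multiplying an $O_P(1)$ sequence by the deterministic $o(1)$ sequence $N^{-r}$ yields an $o_P(1)$ sequence. I would present the Chebyshev version because it is self-contained and explicit, and because it is the cleanest formulation to reference later when this lemma is used to show that privacy noise contributions (scaled by appropriate powers of $N$) vanish in probability within the exponential-family consistency argument.
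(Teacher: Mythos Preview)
Your proposal is correct and takes essentially the same approach as the paper: both arguments compute that $\Var(N^{-r}w) = N^{-2r}\Var(w) \to 0$ and then invoke Chebyshev's inequality to conclude convergence in probability. The paper's version is slightly more terse (stating the variance computation first, then citing Chebyshev), but the content is identical.
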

\ifthenelse{\equal{\showproofs}{1}}{
\begin{proof}
The variance of $N^{-r} w$ is equal to $N^{-2r} \Var(w)$, which goes to zero as $N \to \infty$. By Chebyshev's inequality, this implies that $N^{-r} w \overset{P}{\to} 0$.
\end{proof}}

\MLEconsistency*

Following standard practice, we will prove this for the case when $\theta$ is scalar; the generalization to vector $\theta$ is straightforward but cumbersome.
We first state the required (standard) regularity conditions. Let
\begin{equation*}
\begin{split}
\ell_n(\theta) &= \sum_{i=1}^n \ell(\theta; x_i) = \sum_{i=1}^n \big(\log p(x_i; \theta) - \log h(x)\big) \\&= \theta \sum_{i=1}^n T(x_i) - n A(\theta)
\end{split}
\end{equation*}
be the log-likelihood of a sample $x_{1:n}$ from the exponential family model using the definition of log-likelihood from Sec.~\ref{introduction}. Let $\ell(\theta) = \ell_1(\theta)$ be the log-likelihood of a single $x \sim p(x; \theta)$.

We assume the log-likelihood satisfies the conditions given in the book of \citet[Section 4.4.2]{davison2003statistical}. If it does, then we have the following
\begin{enumerate}[label=(F\arabic*)]
\item $\E_{\theta}[\ell'(\theta)] = 0$.
\item $\Var_{\theta}[\ell'(\theta)] = -\E_{\theta}[\ell''(\theta)] = I(\theta)$.
\item Given a sequence of estimators $\hat{\theta}_n \toP \theta$,  for all $\tilde{\theta}_n \in [\theta, \hat{\theta}_n]$, $\;\frac{1}{2\sqrt{n}} \ell'''_n(\Tilde{\theta}_n)(\hat{\theta}_n - \theta)^2 \toP 0$ 
\end{enumerate}
Facts (F1) and (F2) are well known exponential family properties.
Also recall that, for an exponential family,
\begin{enumerate}[label=(F\arabic*),resume]
  \item $I(\theta) = A''(\theta)$.
  \item $-\ell''(\theta)$ is \emph{deterministic} and equal to $I(\theta)$.
\end{enumerate}

\begin{proof}
Let $\lambda_n(\theta) = f\big(\theta, w + \sum_{i=1}^n T(x_i)\big)$ be the objective of the SSP-MLE optimization problem. We have
  \begin{equation*}
  \begin{aligned}
    \lambda_n(\theta)
    &= \theta \Big(w + \sum_{i=1}^n T(x_i)\Big) - n A(\theta) \\
    &= \theta w + \theta \sum_{i=1}^n T(x_i) - n A(\theta) \\
    &= \theta w + \ell_n(\theta)
  \end{aligned}
  \end{equation*}
  where $\ell_n(\theta)$ is the log-likelihood of the true sample. That is, the original objective $\ell_n(\theta)$ is perturbed by the linear function $\theta w$ to obtain $\lambda_n(\theta)$. The derivatives are therefore related as:
  \begin{align}
    \lambda_n'(\theta) &= w + \ell'_n(\theta), \label{eq:deriv1} \\
    \lambda_n^{(k)}(\theta) &= \ell^{(k)}(\theta), \quad k > 1. \label{eq:deriv2}
  \end{align}
  
  At the optimum $\hat{\theta}_n$, the first derivative of $\lambda_n$ is equal to zero. $\ell'(\theta)$ is a sum of i.i.d. terms with mean $0$ and variance $I(\theta)$, more specifically:
    $$
    \ell'(\theta) = \sum_{i=1}^n \left( T(x_i) - A'(\theta) \right)
    $$
    
  For asymptotic equivariance, we are interested in the sequence of estimators $\hat\theta_n$ when the ``true parameter'' follows the sequence $\theta + h_n/\sqrt{n}$.
  We follow the standard approach of writing the Taylor expansion of the first derivative about the true parameter $\theta + h_n/\sqrt{n}$:
  \begin{equation}
    0 = w + \ell'_n(\theta + h_n/\sqrt{n})  + \ell''_{n}(\theta + h_n/\sqrt{n})(\hat{\theta}_n - \theta - h_n/\sqrt{n}) + Z_n
\label{taylor}
  \end{equation}
  where we have used Eqs.~\eqref{eq:deriv1} and \eqref{eq:deriv2} to replace the derivatives of $\lambda$ on the right-hand side, and $Z_n = \tfrac{1}{2} \ell'''_n(\Tilde{\theta}_n)(\hat{\theta}_n - \theta - h_n/\sqrt{n})^2$ is the second-order Taylor term, with $\tilde{\theta}_n$ some point in the interval $[\theta + h_n/\sqrt{n}, \hat{\theta}_n]$.
\\

Multiply both sides of the equation by $\frac{1}{\sqrt{n}}$ and rearrange to get
\begin{equation*}
\begin{split}
  \sqrt{n}(\hat{\theta}_n - \theta - h_n/\sqrt{n})
  &= \frac{\frac{1}{\sqrt{n}} w + \frac{1}{\sqrt{n}} \ell'_n(\theta + h_n/\sqrt{n}) + \frac{1}{\sqrt{n}}Z_n}
  {-\frac{1}{n}\ell''_{n}(\theta + h_n/\sqrt{n})}
  \\&= \frac{\frac{1}{\sqrt{n}} w + \frac{1}{\sqrt{n}} \ell'_n(\theta + h_n/\sqrt{n}) + \frac{1}{\sqrt{n}}Z_n}
         {I(\theta + h_n/\sqrt{n})}
\end{split}
\end{equation*}
where in the second equality we used (F5). By Lemma~\ref{varw}, $\frac{1}{\sqrt{n}}w \toP 0$ and by (F3) $\frac{1}{\sqrt{n}}Z_n \toP 0$, so, by Slutsky,
\begin{equation}\label{eq:middle_term}
  \sqrt{n}(\hat{\theta}_n - \theta - h_n/\sqrt{n}) \toP
  \frac{\frac{1}{\sqrt{n}} \ell'(\theta + \frac{h_n}{\sqrt{n}})}{I(\theta + \frac{h_n}{\sqrt{n}})} = \underbrace{\frac{\ell'(\theta + \frac{h_n}{\sqrt{n}})}{\sqrt{n I(\theta + \frac{h_n}{\sqrt{n}})}}}_{(B)} \frac{1}{\sqrt{I(\theta + \frac{h_n}{\sqrt{n}})}}
\end{equation}

We know that under the regularity assumptions the Fisher information $I(\cdot)$ is a continuous function and so, since $\theta + \frac{h_n}{\sqrt{n}} \to \theta$, then by continuity ($I(\cdot)$ is deterministic):

\begin{align*}
    \left(I \left(\theta + \frac{h_n}{\sqrt{n}}\right) \right)^{-1/2} \to \left(I(\theta)\right)^{-1/2} 
\end{align*}

We now focus on the asymptotic behavior of $(B)$. We will use the fact that in exponential families, $\E_{\theta} T(x) = A'(\theta)$ and $\Var_\theta T(x) = A''(\theta) = I(\theta)$. For simplicity of notation, define:
$$
\mu_n = A'\left(\theta + \frac{h_n}{\sqrt{n}}\right).
$$

Define now the triangular array written in the following notation:

\begin{align*}
&T(x_1) - \mu_1 \hspace{5cm} x_1 \sim \mathbb{P}_{\theta + \frac{h_1}{1}}\\
&T(x_1) - \mu_2, T(x_2) - \mu_2 \hspace{3cm} x_{1:2} \overset{i.i.d.}{\sim} \mathbb{P}_{\theta + \frac{h_2}{\sqrt{2}}}\\
&T(x_1) - \mu_3, T(x_2) - \mu_3, T(x_3) - \mu_3 \hspace{1.2cm} x_{1:3} \overset{i.i.d.}{\sim} \mathbb{P}_{\theta + \frac{h_3}{\sqrt{3}}}\\
&... \hspace{7cm} ... \\
&T(x_1) - \mu_n, T(x_2) - \mu_n, ..., T(x_n) - \mu_n \hspace{0.6cm} x_{1:n} \overset{i.i.d.}{\sim} \mathbb{P}_{\theta + \frac{h_n}{\sqrt{n}}}
\end{align*}

Let's focus on the $n$-th row. By construction the sum over the $n$-th row is $S_n=\sum_{i=1}^n (T(x_i) - \mu_n) = \ell'(\theta + \frac{h_n}{\sqrt{n}})$, so the numerator of $(A)$. Each term in the $n$-th row has mean zero and:

$$
\sigma^2_n = \sum_{i=1}^n \Var[T(x_i) - \mu_n] = n I\left(\theta + \frac{h_n}{\sqrt{n}} \right).
$$

If for every $\epsilon > 0$ the following condition holds:

$$
\lim_{n \to \infty}  \frac{1}{\sigma^2_n} \sum_{i=1}^n \mathbb{E}\left[(T(x_i) - \mu_n )^2 \mathbf{1}\left( \abs{T(x_i) - \mu_n} \geq \epsilon \sigma_n \right)\right] = 0,
$$

then $S_n/\sigma_n \to \mathcal{N}(0,1)$ by the Lindeberg-Feller Central Limit Theorem. By plugging in the terms in the condition above we have that:

\begin{align*}
    \lim_{n \to \infty}  &\frac{1}{n I\left(\theta + \frac{h_n}{\sqrt{n}}\right)} \sum_{i=1}^n \mathbb{E}\left[(T(x_i) - \mu_n )^2 \mathbf{1}\left( \abs{T(x_i) - \mu_n} \geq \epsilon \sqrt{n}\sqrt{I\left(\theta + \frac{h_n}{\sqrt{n}} \right)} \right)\right] \\
    &= \lim_{n \to \infty} I\left(\theta + \frac{h_n}{\sqrt{n}}\right)^{-1} \mathbb{E}\left[(T(x_1) - \mu_n )^2 \mathbf{1}\left( \abs{T(x_1) - \mu_n} \geq \epsilon \sqrt{n}\sqrt{I\left(\theta + \frac{h_n}{\sqrt{n}}\right)} \right)\right]
\end{align*}

with the equality due to i.i.d. sampling within the row of the triangular array. Note that for any $x_1$,

$$
\lim_{n \to \infty} (T(x_1) - \mu_n )^2 \mathbf{1}\left( \abs{T(x_1) - \mu_n} \geq \epsilon \sqrt{n}\sqrt{I\left(\theta + \frac{h_n}{\sqrt{n}}\right)} \right) = 0,
$$

and that the integrand above is dominated by $(T(x_1) - \mu_n )^2$, which is integrable and finite, since $\mathbb{E}[(T(x_1) - \mu_n )^2]$ is finite. Hence by the dominated convergence theorem, the limit is zero and the condition is satisfied.\\

Going back to equation~\eqref{eq:middle_term}, we then have that

$$
\frac{\frac{1}{\sqrt{n}} \ell'(\theta + \frac{h_n}{\sqrt{n}})}{I(\theta + \frac{h_n}{\sqrt{n}})} \toD \mathcal{N}(0,1) \cdot \left(I(\theta)\right)^{-1/2} = \mathcal{N}(0,I(\theta)^{-1}),
$$
which proves that $\sqrt{n}(\hat{\theta}_n - \theta - h_n/\sqrt{n}) \rightsquigarrow \N(0, I(\theta)^{-1})$. Setting $h_n = 0$, it is straightforward to find that $\sqrt n (\hat\theta_n - \theta) \rightsquigarrow \mathcal{N}(0, I(\theta)^{-1})$. This proves that SSP-MLE is asymptotically equivariant.
\end{proof}

\section{Proofs for OLS}
\label{sec:proofs-ols}

\OLS*

Before proving the theorem, we give two lemmas. The first is standard and describes the asymptotics of the dominant term.
\begin{restatable}{lemma}{OLSXTu}
    \label{lem:XTu}
    Under the assumptions of the OLS model in Section~\ref{sec:OLS}, $\frac{1}{\sqrt{n}}X^T \u \toD \N(0, \sigma^2 Q)$.
\end{restatable}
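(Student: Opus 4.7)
\bigskip
\noindent\textbf{Proof proposal for Lemma~\ref{lem:XTu}.}
The plan is to view $\tfrac{1}{\sqrt{n}}X^T\u$ as a scaled sum of i.i.d.\ vectors and apply the multivariate central limit theorem. First I would rewrite
\[
\frac{1}{\sqrt{n}}X^T\u \;=\; \frac{1}{\sqrt{n}}\sum_{i=1}^n \x_i u_i \;=\; \frac{1}{\sqrt{n}}\sum_{i=1}^n Z_i,
\]
where $Z_i := \x_i u_i \in \R^p$. By the model assumptions the pairs $(\x_i,u_i)$ are i.i.d., so the $Z_i$ are i.i.d.\ as well.

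Next I would compute the first two moments of $Z_i$ using the independence of $\x_i$ and $u_i$ together with $\E[u_i]=0$ and $\E[u_i^2]=\sigma^2$. The mean is
\[
\E[Z_i] \;=\; \E[u_i]\,\E[\x_i] \;=\; 0,
\]
and the covariance is
\[
\E[Z_i Z_i^T] \;=\; \E[u_i^2\,\x_i \x_i^T] \;=\; \E[u_i^2]\,\E[\x_i \x_i^T] \;=\; \sigma^2 Q.
\]
Finiteness of this covariance (needed for CLT) follows because $\E\|Z_i\|^2 = \sigma^2\,\mathrm{tr}(Q) < \infty$, which is guaranteed by the assumption that $\E[\x\x^T]=Q$ exists and $\sigma^2<\infty$.

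Finally I would invoke the multivariate Lindeberg--L\'evy CLT on the i.i.d.\ sequence $Z_1,Z_2,\ldots$ with mean $0$ and covariance $\sigma^2 Q$ to conclude
\[
\frac{1}{\sqrt{n}}\sum_{i=1}^n Z_i \;\toD\; \N(0,\sigma^2 Q),
\]
which is exactly the claim. There is no real obstacle here: the only subtle point is the clean factorization of the second-moment expectation, which relies crucially on the independence between $\x_i$ and $u_i$ posited in the OLS model; once that is invoked the result is a direct application of the classical multivariate CLT.
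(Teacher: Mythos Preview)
Your proposal is correct and follows essentially the same approach as the paper: write $X^T\u=\sum_i \x_i u_i$ as a sum of i.i.d.\ mean-zero vectors, compute the covariance $\sigma^2 Q$ using the independence of $\x_i$ and $u_i$, and apply the (multivariate) CLT. The only addition you make is the explicit check that $\E\|Z_i\|^2<\infty$, which the paper leaves implicit.
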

\begin{proof}
Observe that $X^T \u = \sum_{i=1}^n \x_i u_i$  is a sum of iid terms, and, using the assumptions of the model in Section~\ref{introduction}, the mean and variance of the terms are $\E[\x_i u_i] = \E[\x_i]\E[u_i] = 0$ and $\Var(\x_i u_i) = \Var(\x u) = \E[\x u u \x^T ] = \E[u^2 \x \x^T] = \E[u^2] \E[\x \x^T] = \sigma^2 Q$.
The result follows from the central limit theorem.
\end{proof}

The theorem involves asymptotic statements about $\hat\beta_n$ and $\hat\beta^*_n$. The following lemma is a general asymptotic result that will apply to both estimators using Eqs.~\eqref{eq:beta_hat} and \eqref{eq:beta_hat_star}. 

\begin{lemma}
  \label{lem:ols-general}
  Define the function
  $$
  \B_n\{\check{Q}, \check{\beta}, \check{Z}, \check{V}, \check{w}\} = 
  \left(\check{Q}+ \frac{1}{n}\check{V} \right)^{-1} \check{Q} \check{\beta}
  + \left(\check{Q} + \frac{1}{n}\check{V} \right)^{-1} \left(\frac{1}{\sqrt{n}} \check{Z} + \frac{1}{n} \check{w} \right)
  $$
  and suppose the sequences $Q_n, \beta_n, Z_n, V_n, w_n$ are defined on a common probability space and satisfy
  \begin{enumerate}[label=(\roman*)]
    \item $Z_n \toD \N(0, \sigma^2 Q)$,
    \item  $Q_n \toP Q$,
    \item $\beta_n, V_n, w_n$ are all $O_P(1)$.
  \end{enumerate}
  
  Then
  $$
  \sqrt{n}\Big(\B_n\{Q_n, \beta_n, Z_n, V_n, w_n\} - \beta_n\Big) \toD \N(0, \sigma^2 Q^{-1}).
  $$
\end{lemma}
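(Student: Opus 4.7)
The plan is to algebraically rewrite $\sqrt{n}(\B_n - \beta_n)$ so that the negligible terms are clearly $o_P(1)$ and the single surviving stochastic term is $Z_n$, and then finish via continuous mapping and Slutsky.

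First, I would simplify the deterministic-looking part. Using the identity $\bigl(Q_n + \tfrac{1}{n}V_n\bigr)^{-1} Q_n - I = -\bigl(Q_n + \tfrac{1}{n}V_n\bigr)^{-1}\tfrac{1}{n}V_n$, the difference $\B_n - \beta_n$ becomes
\begin{equation*}
\B_n - \beta_n = \Bigl(Q_n + \tfrac{1}{n}V_n\Bigr)^{-1}\Bigl(\tfrac{1}{\sqrt{n}} Z_n + \tfrac{1}{n} w_n - \tfrac{1}{n}V_n\beta_n\Bigr).
\end{equation*}
Multiplying by $\sqrt{n}$ gives
\begin{equation*}
\sqrt{n}(\B_n - \beta_n) = \Bigl(Q_n + \tfrac{1}{n}V_n\Bigr)^{-1}\Bigl(Z_n + \tfrac{1}{\sqrt{n}} w_n - \tfrac{1}{\sqrt{n}}V_n\beta_n\Bigr).
\end{equation*}

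Next, I would dispatch the nuisance factors. Since $V_n = O_P(1)$, we have $\tfrac{1}{n}V_n \toP 0$, so by (ii) and the continuous mapping theorem applied to matrix inversion, $\bigl(Q_n + \tfrac{1}{n}V_n\bigr)^{-1} \toP Q^{-1}$. Likewise, because $w_n$, $V_n$, and $\beta_n$ are all $O_P(1)$, both $\tfrac{1}{\sqrt{n}}w_n$ and $\tfrac{1}{\sqrt{n}}V_n\beta_n$ are $o_P(1)$. Combining with assumption (i), Slutsky's theorem yields
\begin{equation*}
Z_n + \tfrac{1}{\sqrt{n}} w_n - \tfrac{1}{\sqrt{n}}V_n\beta_n \toD \N(0, \sigma^2 Q).
\end{equation*}

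Finally, I would apply Slutsky one more time: the product of a sequence converging in probability to the constant $Q^{-1}$ with a sequence converging in distribution to $\N(0, \sigma^2 Q)$ converges in distribution to $Q^{-1}\N(0, \sigma^2 Q) = \N(0, \sigma^2 Q^{-1} Q Q^{-1}) = \N(0, \sigma^2 Q^{-1})$, which is the claim.

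There isn't really a hard step here; the whole proof is bookkeeping. The one thing to be careful about is that the key algebraic manipulation cancels the $\sqrt{n}$ scaling of the leading term $\sqrt{n}(Q_n + \tfrac{1}{n}V_n)^{-1}Q_n\beta_n - \sqrt{n}\beta_n$, reducing it to a term of order $\tfrac{1}{\sqrt{n}}V_n\beta_n$, which is where the $O_P(1)$ assumption on $V_n$ and $\beta_n$ is used. Once that cancellation is done, the rest is standard.
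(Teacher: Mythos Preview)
Your proof is correct and takes essentially the same approach as the paper: both use the identity $\bigl(Q_n + \tfrac{1}{n}V_n\bigr)^{-1}Q_n - I = -\bigl(Q_n + \tfrac{1}{n}V_n\bigr)^{-1}\tfrac{1}{n}V_n$ to kill the $\sqrt{n}$ in the leading term, then finish with Slutsky and continuous mapping. The only cosmetic difference is that you fold the two pieces into a single bracket before applying Slutsky, whereas the paper treats the $-\tfrac{1}{\sqrt{n}}V_n\beta_n$ contribution and the $Z_n + \tfrac{1}{\sqrt{n}}w_n$ contribution as two separate terms and shows the first is $o_P(1)$ on its own.
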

\begin{proof}
Substitute the sequences into $\B_n$ and rearrange to get
\begin{align}
 \sqrt{n}
 \left( \B_n - \beta_n\right) = \sqrt{n}\left( \left(Q_n +  \frac{1}{n}V_n \right)^{-1} Q_n - I \right)
 \beta_n 
+ 
 \sqrt{n}\left(Q_n + \frac{1}{n}V_n \right)^{-1} 
 \left(\frac{1} {\sqrt{n}}Z_n + \frac{1}{n}w_n \right)
 \label{eq:ols-pivot-general}
\end{align}
First, note that the sequences $\frac{1}{n}V_n$, $\frac{1}{\sqrt{n}} V_n$ and $\frac{1}{\sqrt{n}}w_n$, which will appear below, are all $o_P(1)$,  since $V_n$ and $w_n$ are $O_P(1)$.

The first term in Eq.~\eqref{eq:ols-pivot-general} converges to zero in probability. Specifically, a manipulation shows:

\begin{align*}
\sqrt{n}
\left( \left(Q_n + \frac{1}{n}V_n \right)^{-1} Q_n - I \right) \beta_n
&=  \left(Q_n + \frac{1}{n}V_n \right)^{-1} \left( -\frac{1}{\sqrt{n}}V_n\right) \beta_n \\
&= O_P(1)o_P(1)O_P(1) \\
& = o_P(1)
\end{align*}
For the first factor on the right side,  $(Q_n + \frac{1}{n}V_n)^{-1} \toP Q^{-1}$ (by Slutsky's theorem, since $Q_n \toP Q$ and $\frac{1}{n}V_n \toP 0$), and is therefore $O_P(1)$. For the second factor, we argued $-\frac{1}{\sqrt{n}} V_n= o_P(1)$. For the third factor, $\beta_n = O_P(1)$ by assumption. 

The second term in Eq.~\eqref{eq:ols-pivot-general} converges in distribution to $\N(0, \sigma^2 Q)$. Rewrite it as
$$
\left(Q_n + \frac{1}{n}V_n \right)^{-1}
\left(Z_n + \frac{1}{\sqrt{n}}w_n \right).
$$
We already argued that  $(Q_n + \frac{1}{n}V_n)^{-1} \toP  Q^{-1}$ and $\frac{1}{\sqrt{n}} w_n \toP 0$. By assumption, $Z_n \toD \N(0, \sigma^2 Q)$. Therefore, by Slutsky's theorem, the entire term converges in distribution to $Q^{-1}\N(0, \sigma^2 Q) = \N(0, \sigma^2 Q^{-1})$. 
\end{proof}

We are ready to prove the Theorem~\ref{thm:OLS}.

\begin{proof}[Proof of Theorem~\ref{thm:OLS}]

We first wish to show that $\sqrt{n}\left(\hat{\beta}_n - \beta \right) \toD \N(0, \sigma^2Q^{-1})$. To see this, write $\hat{\beta}_n  = \B_n\left\{ \frac{1}{n}X^T X, \beta, \frac{1}{\sqrt{n}}X^T \u, V, w\right\}$ and apply Lemma~\ref{lem:ols-general}. It is easy to verify that the sequences satisfy the conditions of the lemma.

Next, we wish to show that the bootstrap estimator is consistent. By Lemma~\ref{lem:consistency}, it is enough to show that $\sqrt{n}(\hat\beta^*_n - \hat{\beta}_n) \toD \N(0, \sigma^2 Q^{-1})$ conditional on $\omega$ in $\P_\theta$-probability, where $\theta = (\beta, \sigma^2, Q)$ and $\P_\theta$ is the common probability space of the data and privacy random variables, represented by $\omega$. The bootstrap variables $Z^*_n, V^*, w^*$ correspond to the inner measure $\P^*_n$. 
Define $\hat{Q}_n = \frac{1}{n}X^T X + \frac{1}{n}V$. Observe that Eq.~\eqref{eq:beta_hat_star} is equivalent to
\begin{equation}
\hat\beta^*_n = \B_n\{\hat{Q}_n, \hat{\beta}_n, Z^*_n, V^*, w^* \} \ \ 
\text{under } Z^*_n \sim \N(0, \hat{\sigma}^2_n \hat{Q}_n), V^* \sim F_V, w^* \sim F_w,
\label{eq:beta_hat_star_claim1}
\end{equation}
and $(\hat{Q}_n, \hat{\beta}_n, \hat{\sigma}^2)$ are consistent estimators and hence converge in $\P_\theta$-probability to $(Q, \beta, \sigma^2)$.

We can't apply Lemma~\ref{lem:ols-general} directly to Eq.\eqref{eq:beta_hat_star_claim1} because this expression mixes random variables from the outer space ($\hat{Q}_n, \hat{\beta}_n, \hat{\sigma}^2_n$) and inner space $(Z^*_n, V^*, w^*)$. 
Instead, we temporarily reason about a \emph{deterministic} sequence $(Q_n, \beta_n, \sigma^2_n) \to (Q, \beta, \sigma^2)$. Then, by Lemma~\ref{lem:ols-general} applied to the inner probability space,
\begin{equation}
\label{eq:deterministic}
\begin{split}
\sqrt{n}\Big(\B_n\{Q_n, \beta_n, Z^*_n, V^*, w^*\} - \beta_n\Big) \toD \N(0, \sigma^2Q^{-1}) \\
\text{under } Z^*_n \sim \N(0, \sigma^2_n Q_n), V^* \sim P_V, w^* \sim P_w
\end{split}
\end{equation}
The conditions of Lemma~\ref{lem:ols-general} can easily be checked. In particular, we have $Z^*_n \toD \N(0, \sigma^2 Q)$.

We can restate the result Eq.~\eqref{eq:deterministic} as follows: for any fixed sequence $(Q_n, \beta_n, \sigma^2_n) \to (Q, \beta, \sigma^2)$ and all $t$,
$$
\P^*_n\left( \sqrt{n} (\hat\beta^*_n-\hat\beta_n) \leq t \mid \hat Q_n = Q_n, \hat\beta_n = \beta_n, \hat\sigma^2_n = \sigma_n\right) \to F(t)
$$
where $F$ is the CDF of $\N(0, \sigma^2 Q^{-1})$. Lemma~\ref{lem:key_lemma_2} below now implies that
$$
\P^*_n\left( \sqrt{n} (\hat\beta^*_n-\hat\beta_n) \leq t \mid \hat Q_n, \hat\beta_n, \hat\sigma^2_n\right) \to F(t) \text{ in $\P_\theta$-probability},
$$
and the theorem is proved.
\end{proof}

\begin{lemma}\label{lem:key_lemma_2}
Let $g_n: \R^k \to \R^\ell$ be a sequence of functions such that $g_n(h_n) \to c$ for any deterministic sequence $h_n \to h$. Then $g_n(\hat{h}_n) \toP c$ for any random sequence  $\hat{h}_n \toP h$.
\end{lemma}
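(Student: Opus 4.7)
The plan is to first upgrade the pointwise-looking hypothesis into a local uniform statement about $g_n$ near $h$, and then combine that with convergence in probability of $\hat h_n$ in a standard way (mirroring the argument used in Lemma~\ref{lem:seq-conv}).

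Concretely, I would first prove the following uniformity claim: for every $\epsilon > 0$ there exist $\delta > 0$ and $N$ such that $\|g_n(x) - c\| < \epsilon$ whenever $n > N$ and $\|x - h\| < \delta$. The proof is by contradiction. If the claim fails, then for some $\epsilon > 0$ and every pair $(N, 1/k)$ there exist $n > N$ and $x$ with $\|x - h\| < 1/k$ and $\|g_n(x) - c\| \geq \epsilon$. Extract an increasing sequence $n_k \to \infty$ and points $x_{n_k}$ with $\|x_{n_k} - h\| < 1/k$ and $\|g_{n_k}(x_{n_k}) - c\| \geq \epsilon$. Define a deterministic sequence by $h_{n_k} := x_{n_k}$ and $h_n := h$ otherwise. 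Then $h_n \to h$ (the non-$h$ values lie within $1/k$ of $h$), yet $g_{n_k}(h_{n_k}) = g_{n_k}(x_{n_k})$ stays at distance at least $\epsilon$ from $c$, so $g_n(h_n) \not\to c$, contradicting the hypothesis.

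Given this uniformity, the probabilistic conclusion is quick. Fix $\epsilon, \gamma > 0$, pick $\delta, N$ as above, and use $\hat h_n \toP h$ to choose $N'$ with $\Pr[\|\hat h_n - h\| \geq \delta] < \gamma$ for $n > N'$. On the event $\{\|\hat h_n - h\| < \delta\}$, the uniformity (valid for $n > N$) forces $\|g_n(\hat h_n) - c\| < \epsilon$, so for $n > \max(N, N')$,
\[
\Pr\big[\|g_n(\hat h_n) - c\| \geq \epsilon\big] \leq \Pr\big[\|\hat h_n - h\| \geq \delta\big] < \gamma,
\]
which is exactly $g_n(\hat h_n) \toP c$.

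The main obstacle is the extraction step. The subtlety is that the hypothesis is phrased for \emph{every} deterministic sequence converging to $h$, so I have to be careful to build a genuine convergent sequence in $\R^k$ (not a mere subsequence) whose values at indices $n_k$ witness the failure; padding with the constant value $h$ at non-$n_k$ indices handles this cleanly. Everything else is a direct adaptation of the bounded-sequence argument used in Lemma~\ref{lem:seq-conv}, with ``bounded by $M$'' replaced by ``within $\delta$ of $h$''.
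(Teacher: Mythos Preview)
Your proposal is correct and follows essentially the same approach as the paper: first upgrade the hypothesis to a local uniformity statement (for every $\epsilon$ there exist $\delta$ and $N$ such that $\|g_n(x)-c\|<\epsilon$ whenever $n>N$ and $\|x-h\|<\delta$), proved by contradiction via a bad convergent sequence, and then combine with $\hat h_n \toP h$ exactly as you do. The only cosmetic difference is in the contradiction step, where the paper builds the offending sequence in blocks using a null sequence $\delta_k$ rather than padding a subsequence with the constant $h$; both constructions are valid and yield the same conclusion.
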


\begin{proof}
Take $c = 0$ without loss of generality, let $\|\cdot \|$ be any norm and $d(x, y) = \|x - y\|$.
Fix $\epsilon > 0$. It must be the case that
\begin{equation}
\label{eq:converge}
\exists\, \delta > 0, n_0 \in \mathbb{N} \text { such that:  }\quad d(h', h) < \delta \implies  \|g_n(h')\| < \epsilon, \forall n \geq n_0.
\end{equation}

Otherwise, we can construct a convergent sequence $h_n \to h$ with $\limsup_{n \to \infty} \|g_n(h_n)\| \geq \epsilon$, which violates the conditions of the Lemma.\footnote{If Eq.~\eqref{eq:converge} is not true, then for all $\delta > 0$ and $n_0 \in \mathbb{N}$, there exists $h'$ such that $d(h', h) < \delta$ and  $\|g_n(h')\| \geq \epsilon$ for some $n \geq n_0$. Then we can construct a sequence $h_n \to h$ as follows. Let $\delta_k$ be any sequence such that $\delta_k \to 0$. Set $n_0 = 0$, and, for $k \geq 1$, select $h'$ such that $d(h', h) < \delta_k$ and $\|g_{n'}(h')\| \geq \epsilon$ for some $n' \geq n_{k-1} + 1$. Set $h_n = h'$ for all $n \in \{n_{k-1}+1, \ldots, n'\}$ and let $n_k = n'$. This sequence satisfies $h_n \to h$ but $g_{n_k}(h_{n_k}) \geq \epsilon$ for all $k$, so it is not true that $g_n(h_n) \to 0$. This contradicts the assumptions of the lemma, so Eq.~\eqref{eq:converge} must be true.}

Now, suppose $\hat{h}_n \toP h$. Then, for $n \geq n_0$, by Eq.~\eqref{eq:converge},
$$
\Pr\left[\|g_n(\hat h_n)\| > \epsilon\right] \leq \Pr\left[d(\hat h_n, h) > \delta\right].
$$
Therefore
$$
\lim_{n \to \infty} \Pr\left[\|g_n(\hat{h}_n)\| > \epsilon\right] 
\leq \lim_{n \to \infty} \Pr\left[  d(\hat h_n, h) > \delta \right] = 0,
$$
which proves the result.
\end{proof}

\newpage
\section{Subsample \& Aggregate}\label{sec:additionalexperiments} 
\begin{algorithm}
\caption{Subsample\&Aggregate}\label{SACI}
\hspace*{\algorithmicindent} \textbf{Input} $X, M, x_{min}, x_{max}, L_{min}, L_{max}, var_{max}, \epsilon, \alpha$
\begin{algorithmic}[1]
\Procedure{SubsampleAndAggregate}{}
\State $X_1, ... , X_M \gets \textbf{subsample}(X, M)$
\State $L^*_{min}, L^*_{max} \gets \frac{L_{min}}{\sqrt{N/M}}, \frac{L_{max}}{\sqrt{N/M}}$
\State $var^*_{max} \gets \frac{var_{max}}{N/M}$
\For {$i = 1, ..., M$} 
\State $\hat c_i \gets \textbf{clamp}(\mathcal{A}(X_i), L^*_{min}, L^*_{max})$
\EndFor
\State $\Delta_1 \gets \frac{|L^*_{max} - L^*_{min}|}{M}$
\State $\hat\theta_{DP} \gets \frac{1}{M} \sum_{i=1}^{M} \hat c_i + 
\text{Lap}(0, \frac{\Delta_1}{ \epsilon / 2})$
\For {$i = 1, ... , M$}
\For {$b = 1, ... , B$}
\State $X_{i,b} \gets \textbf{resample}(X_i, \lfloor \frac{N}{M} \rfloor, \text{replace=True})$
\State $\hat c_{i,b} \gets \textbf{clamp}(\mathcal{A}(X_{i,b}), L^*_{min}, L^*_{max})$
\EndFor
\State $\hat{var}_{\hat{c}_i} \gets \textbf{clamp}(\Var(\hat c_{i, 1:B}), 10^{-6}, var^{*}_{max})$
\EndFor
\State $\Delta_2 \gets var^{*}_{max}/M$ 
\State $\hat{var}_{\hat c} \gets \frac{1}{M} \sum_{i=1}^{M} \hat{var}_{\hat{c}_i} + \text{Lap}(0, \frac{\Delta_2}{\epsilon / 2}) $
\State $\hat{var}_{DP} \gets  \frac{1}{M}\hat{var}_{\hat c} + \Var(\text{Lap}(0, \frac{\Delta_1}{ \epsilon / 2})) $
\State $\text{CI}_{DP} \gets [\hat\theta_{DP} - z_{\frac{\alpha}{2}} \sqrt{\hat{var}_{DP}}, \hat\theta_{DP} + z_{\frac{\alpha}{2}}\sqrt{\hat{var}_{DP}}]$
\EndProcedure
\Return $\hat\theta_{DP}, \text{CI}_{DP}$
\end{algorithmic}
\end{algorithm}

\newpage
\section{Additional Experiments}\label{sec:additionalexperiments} 
\begin{figure*}[h!]
    \centering
 \hspace*{-0.5cm}
\begin{subfigure}{.71\textwidth}
  \centering
 \includegraphics[width=1\linewidth]{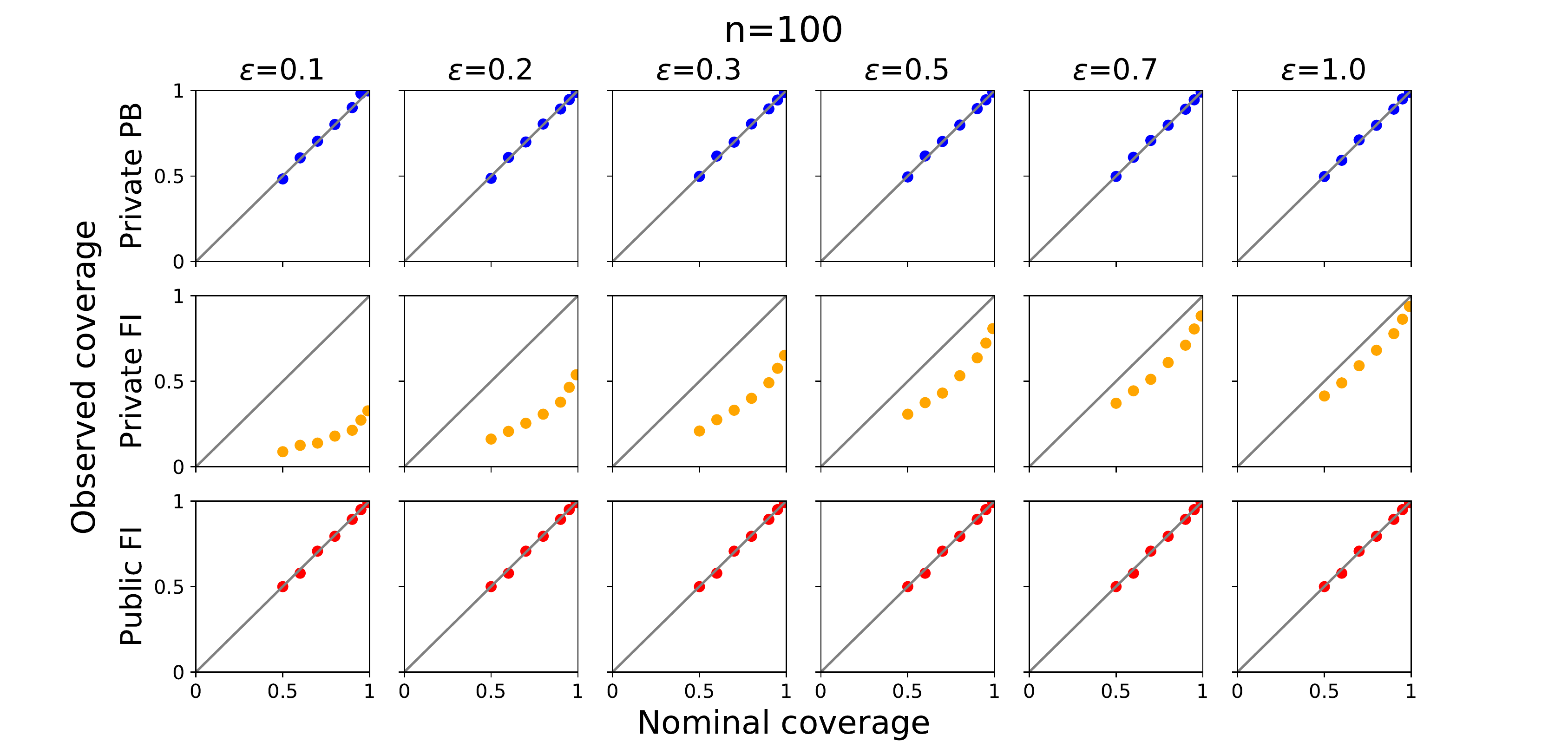}
  \caption{}
\end{subfigure}%
 \hspace*{-0.8cm}
\begin{subfigure}{.36\textwidth}
  \centering
  \includegraphics[width=1\linewidth]{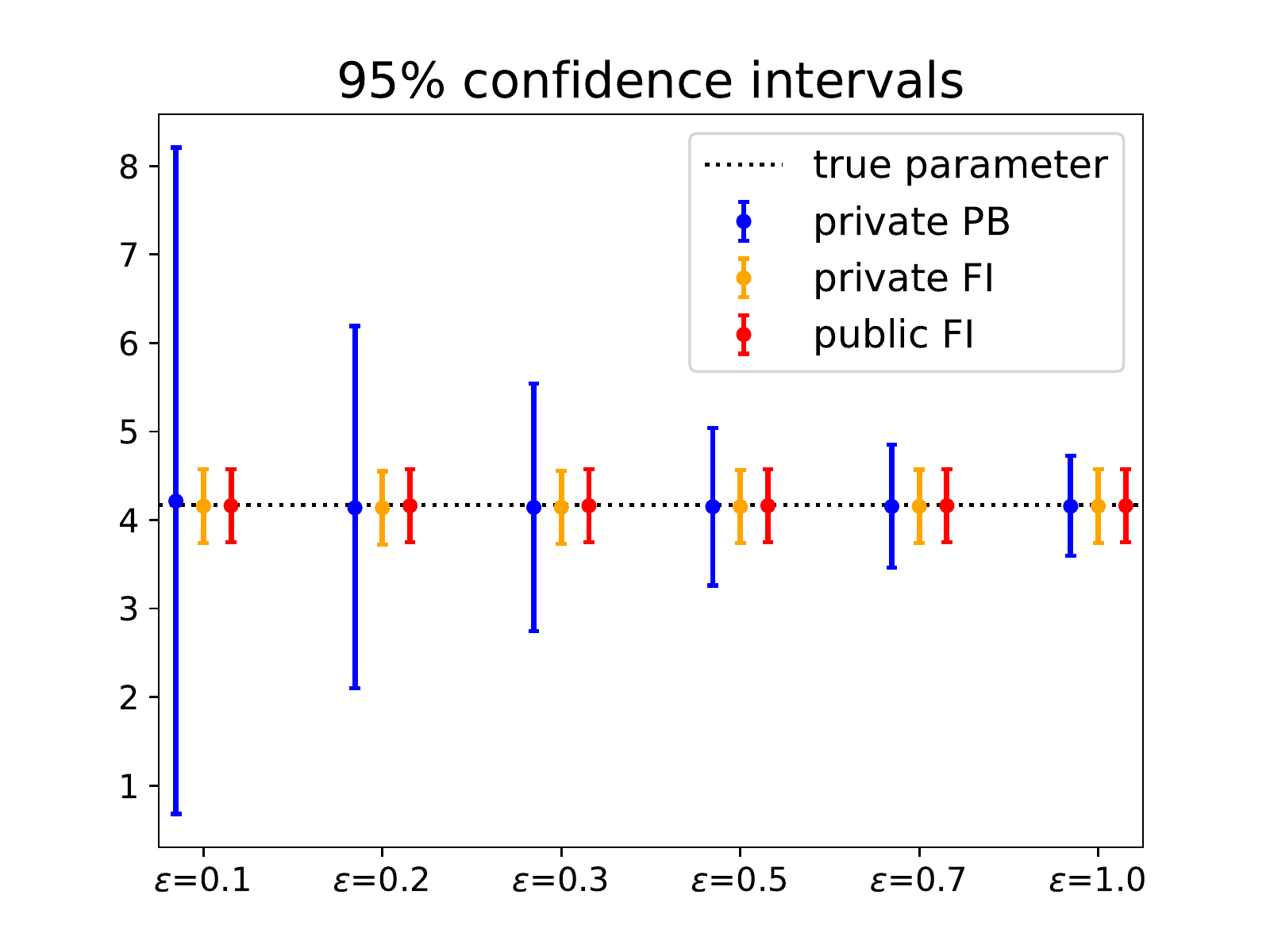}
  \caption{}
\end{subfigure}
\caption{Effects of varying $\epsilon$ for a fixed $n=100$. We selected a Gamma with inference on the scale parameter. The results are qualitatively equivalent for other distributions. (a) Observed coverage vs. nominal coverage of CIs. Coverage levels: $\{0.5, 0.6, 0.7, 0.8, 0.9, 0.95, 0.99\}$. From top to bottom: (i) differentially private parametric bootstrap; (ii) differentially private Fisher intervals; (iii) non-private Fisher CIs. Private methods use SSP via Laplace mechanism with varying values of $\epsilon$. Note that the effect of increasing $\epsilon$ with $n$ fixed is qualitatively similar to the effect of increasing $n$ holding $\epsilon$ fixed. (b) Average CIs for the scale parameter for different $\epsilon$. The width of the private bootstrap CIs approaches that of the public CIs as $\epsilon$ increases.}
\label{fig:eps}
\end{figure*}

\begin{figure*}[h!]
    \centering
    \includegraphics[scale=0.35]{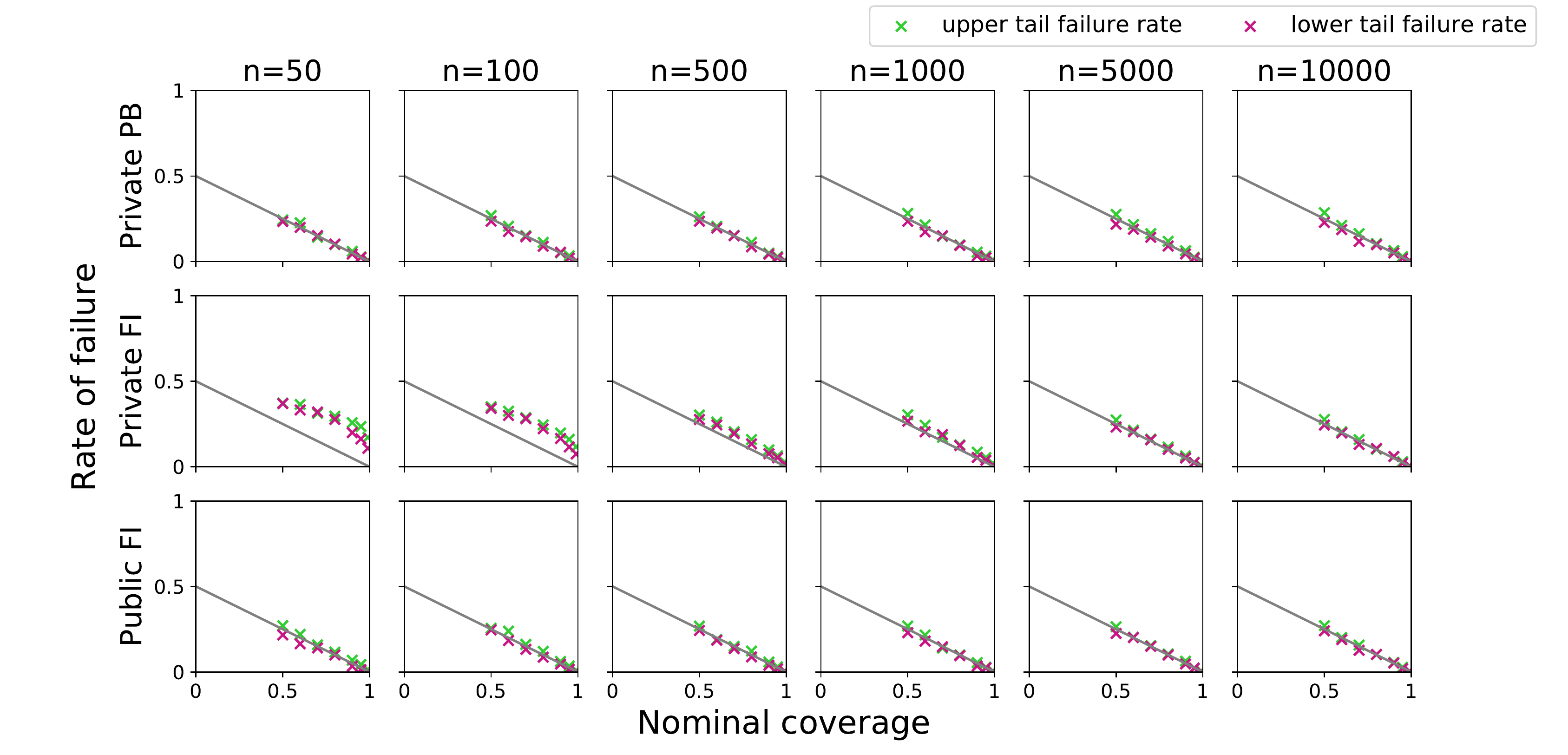}
    \includegraphics[scale=0.35]{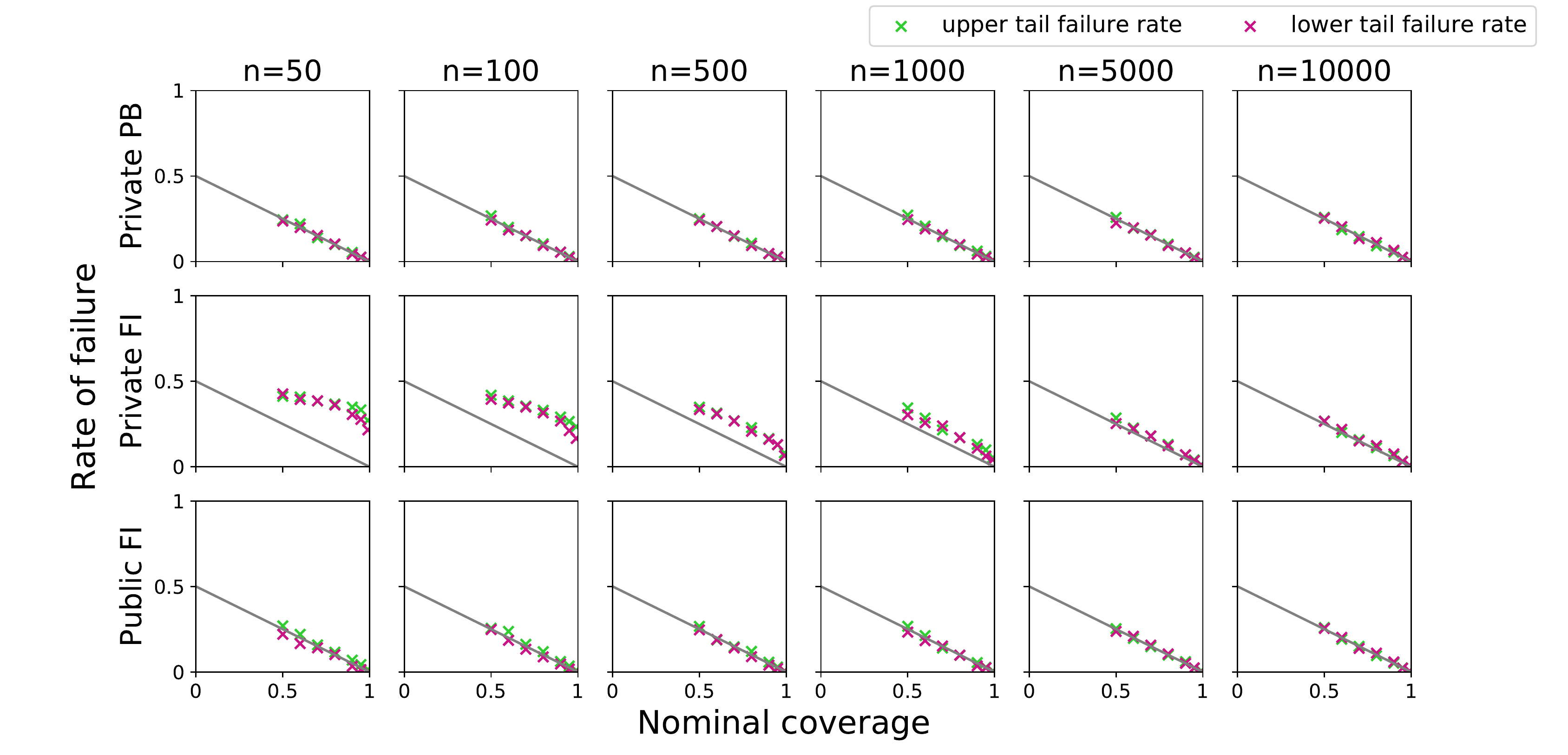}

\caption{In this Figure, we look at the rate of failure of the confidence intervals on the upper vs lower tail. For each of the two plots, the rows represent (i) differentially private parametric bootstrap; (ii) differentially private Fisher intervals; (iii) non-private Fisher intervals. Top: data range and sensitivity computed as described in Section \ref{sec:experiments}. Clamping the data to a range can introduce a bias if the range is not conservative enough. The bias becomes noticeable for large $n$, where the interval width is smaller. In our case, where the range is approximated from a data set of size 1000, a small bias becomes noticeable for $n\geq5000$, where upper-tail failures systematically outnumber lower-tail failures by a small margin. Bottom: same as top plot, with double the range. Increasing the range mitigates the bias.}
\label{fig:upperlower}
\end{figure*}



\begin{figure*}[h!]
\begin{subfigure}{.71\textwidth}
  \centering
 \includegraphics[width=1\linewidth]{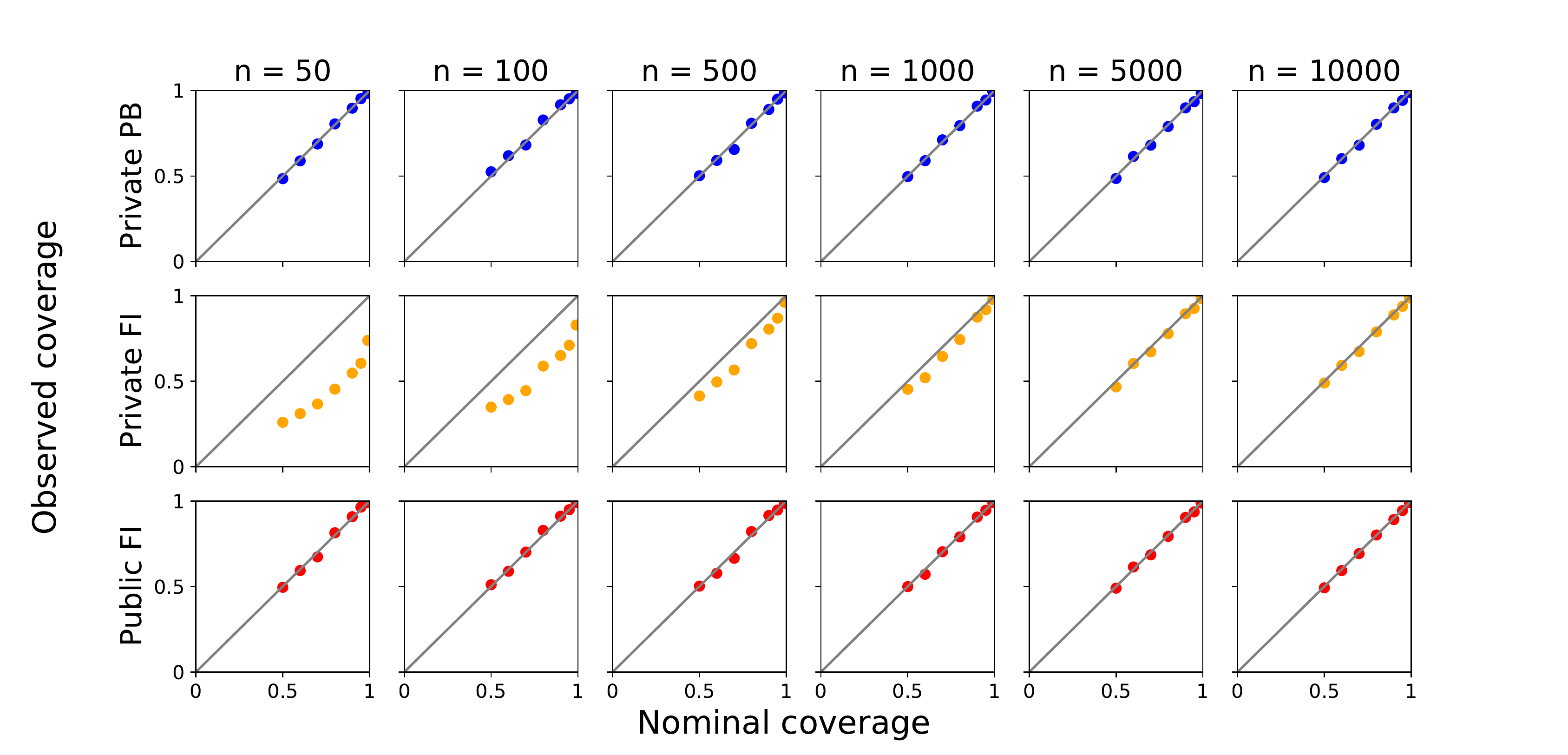}
  \caption{}
\end{subfigure}%
 \hspace*{-0.8cm}
\begin{subfigure}{.36\textwidth}
  \centering
  \includegraphics[width=1\linewidth]{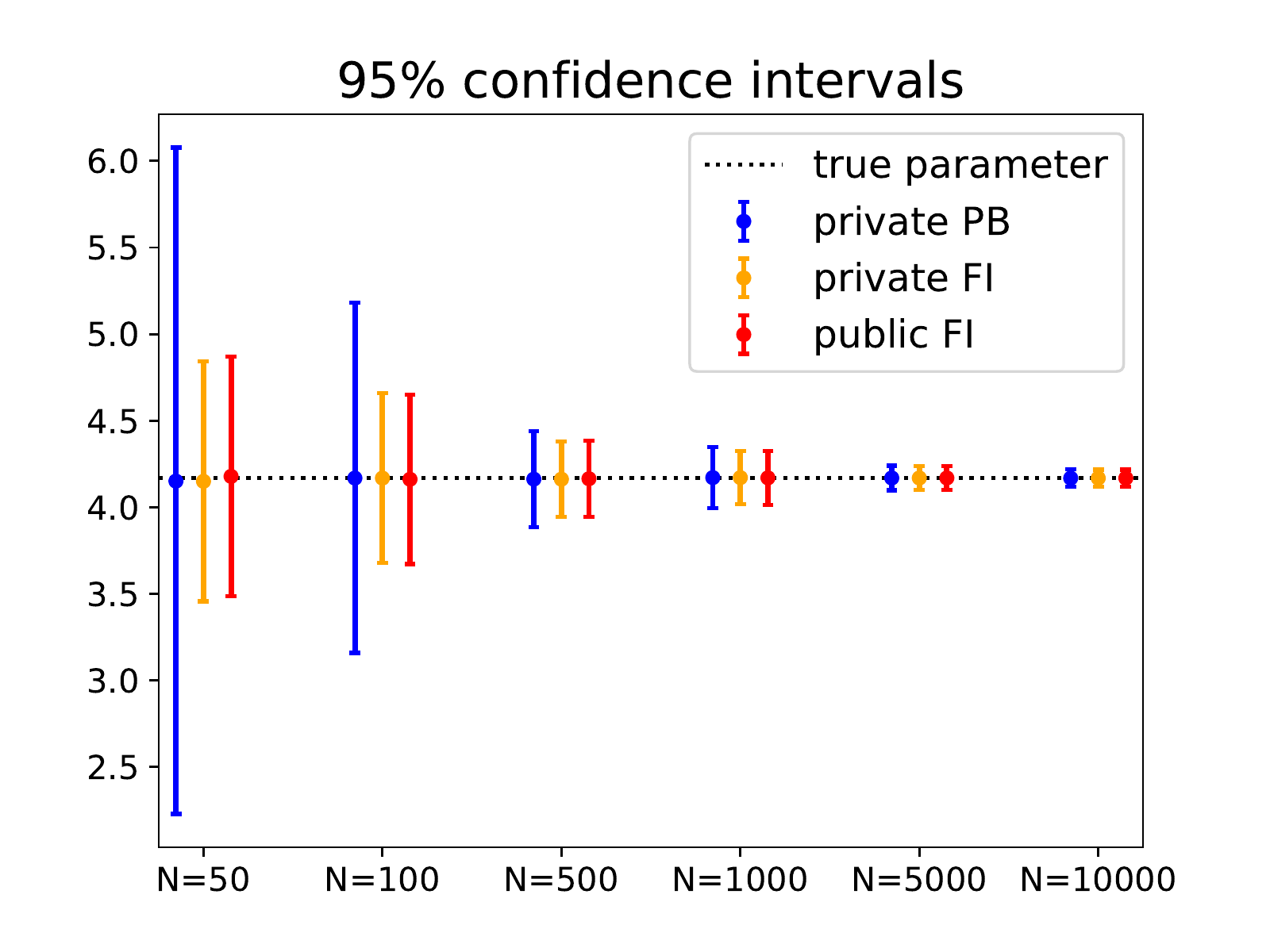}
  \caption{}
\end{subfigure}

\caption{Observed vs nominal coverage (left) and average CI width (right) for a multivariate Gaussian in 5 dimensions, with $\epsilon = 0.5$. We compute CIs for each dimension separately and report results for the first dimension as an example.}
\label{fig:OLSupperlowerols}
\end{figure*}

\begin{figure*}[t!]
\centering
 \includegraphics[scale=0.6]{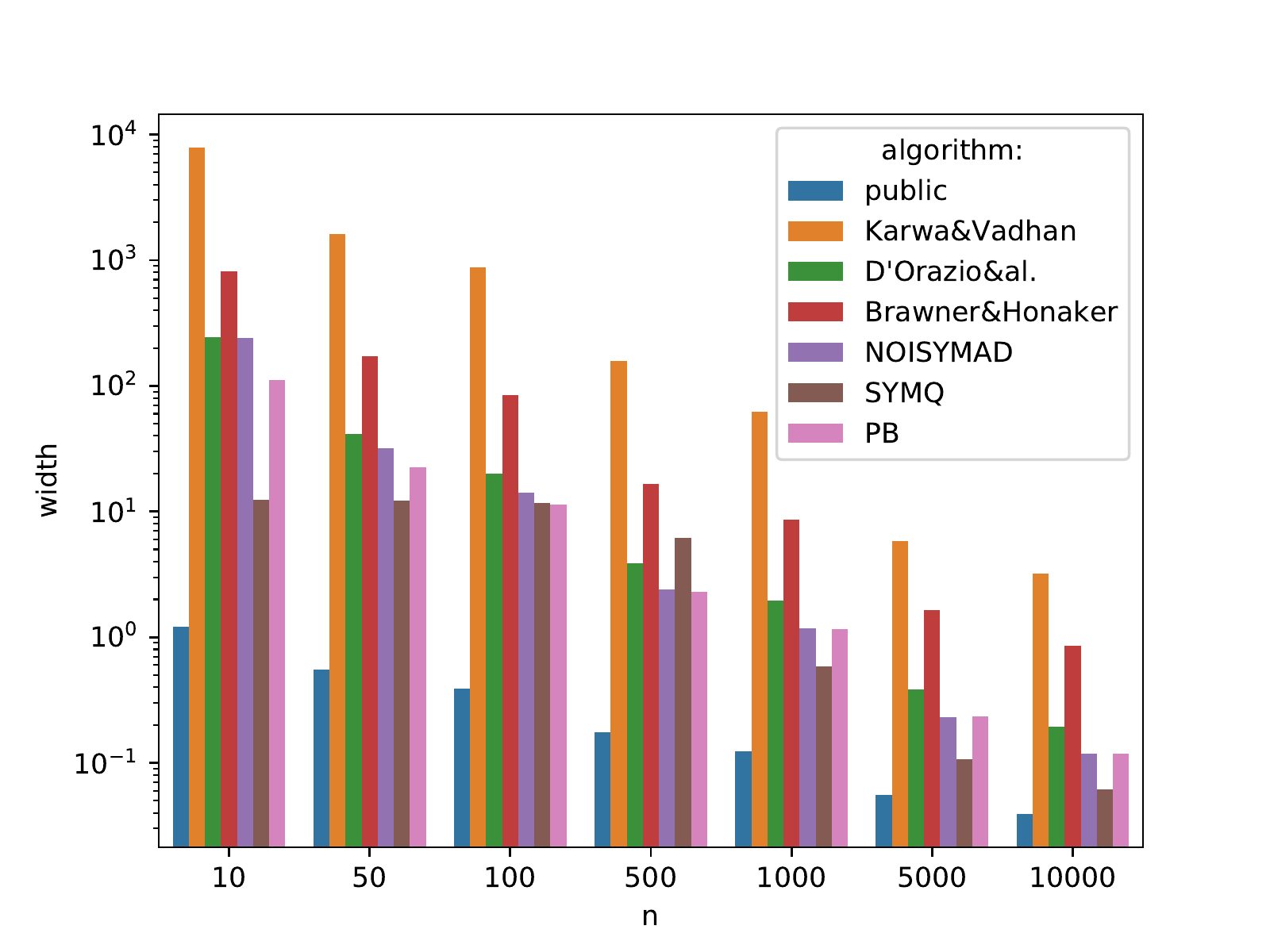}
\caption{For different algorithms, average width (logscale) of differentially private confidence intervals for the mean of a standard normal, range $[-8, 8]$, $\epsilon = 0.1$, for different $n$ levels. ``public'' is the confidence interval computed without differential privacy; ``Karwa\&Vadhan'' refers to~\cite{karwa2017finite}; ``D'Orazio\&al.'' refers to~\cite{d2015differential}; ``Brawner\&Honaker'' refers to~\cite{brawner2018bootstrap}; ``\texttt{NOISYMAD}" and ``\texttt{SYMQ}'' are methods from~\cite{du2020differentially}, and in particular ``\texttt{NOISYMAD}'' is very similar to our parametric bootstrap method (``\texttt{PB}''). We used the publicly available implementation by~\cite{du2020differentially} to reproduce their methods as well as the other prior methods.}
\label{fig:compare}
\end{figure*}

\end{document}